\documentclass[10pt]{article} 

\usepackage[preprint]{rlj}           
\usepackage{bbm}
\usepackage{rl-theory}
\usepackage[ruled]{algorithm2e}

\usepackage{amssymb}            
\usepackage{amsthm}
\usepackage{mathtools}          
\usepackage{mathrsfs}           
\usepackage{graphicx}           
\usepackage{subcaption}         
\usepackage[space]{grffile}     
\usepackage{url}                
\usepackage{lipsum}             

\title{Randomized Exploration for Linear Bandits via\\ Absolute Perturbations}

\setrunningtitle{Randomized Exploration for Linear Bandits via Absolute Perturbations}

\author{Toshinori Kitamura\textsuperscript{1,$\dagger$}, Shuai Liu\textsuperscript{1,$\dagger$}, Csaba Szepesv\'{a}ri\textsuperscript{1}}

\emails{\{toshinor,shuai14,szepesva\}@ualberta.ca}

\affiliations{
$^{1}$\textbf{Department of Computing Science, University of Alberta, Canada}
\par 
$^\dagger$ Equal contribution
}

\contribution{
We introduce Absolute Thompson Sampling (ATS) a simple modification of Thompson Sampling that ensures optimism in expectation while retaining the same computational structure as TS.
}
{
While standard TS analyses rely on anti-concentration arguments and explicit handling of non-optimistic rounds \citep{agrawal2013thompson,abeille2017linear}, ATS enables a regret analysis conceptually similar to that of UCB.
}

\contribution{
To address the $\sqrt{d}$ regret gap between ATS and UCB, we propose Ensemble Absolute Thompson Sampling (EATS), which replaces a single perturbation with the maximum over multiple perturbations to more closely approximate the UCB objective.
}
{
None
}

\keywords{Linear bandit, Randomized exploration} 

\summary{
In stochastic linear bandits, the canonical Upper Confidence Bound (UCB) algorithm admits a simple frequentist regret analysis but can be computationally demanding, while Thompson Sampling (TS) is computationally attractive yet typically harder to analyze due to its non-optimistic nature. We propose Absolute Thompson Sampling (ATS), a simple modification of TS that ensures optimism in expectation by replacing the signed exploration noise with its absolute value. This preserves the computational efficiency of TS while avoiding the technically involved anti-concentration arguments common in TS analyses, enabling a simple UCB-style regret analysis. We show that ATS achieves $\widetilde{O}(d^{3/2}\sqrt{K})$ regret, matching existing bounds for TS in linear bandits. We further introduce Ensemble Absolute Thompson Sampling (EATS), which takes the maximum over multiple absolute perturbations with normalization by the ensemble size. As the ensemble size grows, EATS converges to the UCB objective, recovering UCB behavior in the limit. Experiments show that moderate ensemble sizes already yield strong performance. Our results point to a bridge between randomized exploration and deterministic optimism both in theory and practice.
}

\begin{document}

\maketitle  

\begin{abstract}
In stochastic linear bandits, the canonical Upper Confidence Bound (UCB) algorithm admits a simple frequentist regret analysis but can be computationally demanding, while Thompson Sampling (TS) is computationally attractive yet typically harder to analyze due to its non-optimistic nature. We propose Absolute Thompson Sampling (ATS), a simple modification of TS that ensures optimism in expectation by replacing the signed exploration noise with its absolute value. This preserves the computational efficiency of TS while avoiding the technically involved anti-concentration arguments common in TS analyses, enabling a simple UCB-style regret analysis. We show that ATS achieves $\widetilde{O}(d^{3/2}\sqrt{K})$ regret, matching existing bounds for TS in linear bandits. We further introduce Ensemble Absolute Thompson Sampling (EATS), which takes the maximum over multiple absolute perturbations with normalization by the ensemble size. As the ensemble size grows, EATS converges to the UCB objective, recovering UCB behavior in the limit. Experiments show that moderate ensemble sizes already yield strong performance. Our results point to a bridge between randomized exploration and deterministic optimism both in theory and practice.
\end{abstract}

\section{Introduction}

\looseness=-1
We study stochastic linear bandit problems, where a learner sequentially selects arms from a given arm set and receives noisy linear rewards. Efficient exploration is the central challenge for achieving low regret. Two of the most widely used approaches are optimism-based algorithms, such as Upper Confidence Bound (UCB), and randomized algorithms, such as Thompson Sampling (TS).

\looseness=-1
The UCB algorithm selects arms according to the \emph{Optimism in the Face of Uncertainty} (OFU) principle, maximizing an upper confidence bound on the estimated reward. This principle leads to a simple and elegant regret analysis based on confidence sets and the elliptical potential lemma \citep{abbasi2011improved}. However, UCB can be computationally expensive, as it requires maximizing an elliptical bonus term over the arm set.
In contrast, TS selects arms by sampling a parameter from a posterior distribution and acting greedily with respect to the sampled parameter. TS is computationally attractive since each round reduces to solving a linear maximization problem over the arm set. However, its frequentist regret analysis is more involved than that of UCB. Because the sampled parameter may underestimate the true reward, typical analyses require that TS produces an optimistic parameter with some probability and control the regret incurred during non-optimistic rounds \citep{agrawal2013thompson,abeille2017linear}.

\looseness=-1
Thus, UCB and TS have complementary computational and analytical properties. We bridge this gap by introducing a simple modification of TS that mimics the behavior of UCB. In \cref{sec:ATS}, we introduce \textbf{Absolute Thompson Sampling (ATS)}, which replaces the signed exploration noise in TS with its absolute value. This modification ensures \emph{optimism in expectation}: the exploration bonus used in ATS matches the UCB bonus in expectation. As a result, the regret analysis becomes conceptually similar to that of UCB and avoids explicitly handling non-optimistic rounds. We show that ATS achieves $\widetilde{O}(d^{3/2}\sqrt{K})$ regret with high probability, matching the existing TS regret bound for linear bandits \citep{abeille2017linear}.
Moreover, ATS retains the computational efficiency of TS, requiring only one additional linear maximization per round.

\looseness=-1
While ATS achieves the same regret bound as TS, it incurs an additional $\sqrt{d}$ factor compared to UCB \citep{abbasi2011improved}. To address this gap, we propose an extension of ATS that more closely resembles UCB. Motivated by extreme-value properties of Gaussian random variables (\cref{lemma:motivation-EATS}), we introduce \textbf{Ensemble Absolute Thompson Sampling (EATS)}, which replaces the single perturbation in ATS with the maximum over multiple perturbations. We show that, as the ensemble size grows, the EATS objective converges to the UCB objective, recovering UCB behavior in the limit. Empirically, moderate ensemble sizes already yield strong performance, suggesting a practical interpolation between randomized exploration and deterministic optimism (\cref{sec:experiments}). Establishing theoretical guarantees for EATS with a finite ensemble size remains an interesting open problem (\cref{subsec:ucb-in-limit}).

\looseness=-1
Together, our results highlight a new perspective on randomized exploration: by carefully shaping the exploration term, one can design algorithms that retain the computational efficiency of TS while exhibiting analysis and behavior closely aligned with UCB. Our work opens new directions for designing simple and efficient randomized algorithms for bandit problems.

\section{Related Work}

\looseness=-1
For linear bandits, \citet{agrawal2013thompson} and \citet{abeille2017linear} provided the fundamental frequentist regret analyses of TS, establishing $\widetilde{O}(d^{3/2}\sqrt{K})$ bounds.
A common strategy in these works is to show that TS samples an optimistic parameter with constant probability $p$, and then bound the regret by scaling the optimistic regret by $1/p$ \citep{janz2023ensemble}.
Similar proof techniques were later extended to multinomial logistic bandits \citep{oh2019thompson}, generalized linear bandits \citep{kveton2020randomized}, and linear MDPs \citep{zanette2020frequentist}. \citet{abeille2025and} developed a tighter analysis that avoids relying on constant-probability optimism, achieving $\widetilde{O}(d \sqrt{K})$ regret, matching the lower bound for linear bandits \citep{dani2008stochastic,rusmevichientong2010linearly}. However, their result requires additional structural assumptions on the action set such as the $L_2$ unit ball.

\looseness=-1
While these works establish sublinear regret guarantees, existing linear TS analyses are technically involved, as they must carefully control the regret incurred during non-optimistic rounds.
In contrast, by ensuring \emph{optimism in expectation}, our approach avoids the need to separately analyze non-optimistic rounds, leading to a proof that is conceptually as simple as that of UCB \citep{abbasi2011improved}. See \cref{sec:ATS} for more details of our approach.

\looseness=-1
Closer to our work, \citet{zhang2022feel} and \citet{huix2023tight} modify TS to enforce stronger optimism. However, their approaches rely on sampling from skewed posterior distributions via  Markov Chain Monte Carlo, which increases computational complexity and diminishes the practical appeal of TS. 
In contrast, \citet{vaswani2019old} modify UCB by introducing random perturbations. While they achieve $\widetilde{O}(d\sqrt{K})$ regret, their method requires maximizing an elliptical bonus term over the arm set, which can be computationally expensive in practice. Our method, by contrast, introduces only a minimal modification to TS and can be implemented using the same linear maximization oracle as standard TS.

\section{Preliminary}

\looseness=-1
Given two vectors $x,y \in \R^d$ and a positive definite matrix $A \in \R^{d\times d}$, we denote $\langle x,y\rangle = x^\top y$ and $\norm{x}_A = \sqrt{x^\top A x}$. We denote by $\dS^{d-1}$ the unit sphere in $\R^d$. All-zero and all-one vectors are denoted by $\bzero$ and $\bone$, respectively. For random variables $X_n$ and $X$, we write $X_n \Pto X$ to denote convergence in probability, i.e., for any $\varepsilon > 0$, $\lim_{n\to\infty}\P(\abs{X_n-X}>\varepsilon)=0$.

\subsection{Stochastic Linear Bandits}
\looseness=-1

\looseness=-1
Let $\cA \subset \R^d$ be the compact set of arms. Without loss of generality, we assume $\norm{\phi}_2 \leq 1$ for any $\phi \in \A$.
At each round $k$, the agent pulls an arm $\phi_{k} \in \A$ and observes the reward $r_k = \paren*{\theta^\star}^\top \phi_k + \varepsilon_k$.
Here, $\theta^\star\in \R^d$ is unknown to the agent and satisfies \(\norm{\theta^\star}_2 \leq B\), and $\varepsilon_k$ is zero-mean $R$-sub-Gaussian noise.
Define $r(\phi) \coloneqq (\theta^\star)^\top \phi$. The agent's goal is to achieve sublinear regret:
\begin{align}
{\textstyle
\regret (K) \df 
\sum^K_{k=1} 
r^\star - r(\phi_k)
= o\paren*{K}
}
\end{align}
where $r^\star = \max_{\phi \in \A} r(\phi)$.
We denote $\phi^\star \in \argmax_{\phi \in \A} r(\phi)$ as the optimal arm.

\looseness=-1
We denote $\Lambda_k$ as the regularized Gram matrix and $\hat{\theta}_k$ as the ridge estimator of $\theta^\star$ at round $k$:
\begin{align}
\Lambda_k = \lambda I + \sum_{i=1}^{k-1} \phi_i\phi_i^\top, \quad
\hat{\theta}_k = \Lambda_k^{-1}\sum_{i=1}^{k-1}\phi_i r_i,
\end{align}
where $\lambda > 0$ is a regularization parameter.

\subsection{Basic Algorithms}

\looseness=-1
The two popular algorithms are Upper Confidence Bound (UCB) and Thompson Sampling (TS).

\paragraph{UCB algorithm.}
UCB constructs a confidence set for $\theta^\star$ and selects the arm with the largest upper confidence bound.
The following lemma gives a confidence bound for the estimation error $\hat{\theta}_k - \theta^\star$.

\begin{lemma}[Confidence bound; \citealp{abbasi2011improved}]\label{lemma:confidence-ellipsoid}
\looseness=-1
Let $\sE$ be the event such that for all $k$,
\begin{align*}
\norm*{\hat{\theta}_{k}-\theta^\star}_{\Lambda_{k}} \leq 
\sqrt{\lambda}B + R \sqrt{d \ln \left(\frac{1+k / \lambda}{\delta}\right)} \fd \beta_k\;.
\end{align*}
where $\delta > 0$. Then, for any $\delta>0$, $\mathbb{P}(\sE) \geq 1-\delta$, and consequently, for any $\phi \in \A$ and $k$, 
\begin{align}\label{eq:reward-confidence}
 \abs{\phi^\top \theta^\star - \phi^\top \hat{\theta}_k} \leq \beta_k \norm*{\phi}_{\Lambda_k^{-1}}\;.
\end{align}
\end{lemma}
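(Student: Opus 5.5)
This is the self-normalized confidence bound of \citet{abbasi2011improved}. The plan is to decompose the ridge error, bound the noise term with a martingale tail bound, take a union over time via a stopping-time argument, and then pass from the parameter bound to rewards by Cauchy--Schwarz.

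\textbf{Step 1: decomposition.} Write $r_i = \phi_i^\top\theta^\star + \varepsilon_i$ and set $S_k = \sum_{i<k}\phi_i\varepsilon_i$. Then
\[
\hat\theta_k - \theta^\star = \Lambda_k^{-1}S_k - \lambda\Lambda_k^{-1}\theta^\star .
\]
This gives
\[
\norm{\hat\theta_k-\theta^\star}_{\Lambda_k} \le \norm{S_k}_{\Lambda_k^{-1}} + \lambda\norm{\theta^\star}_{\Lambda_k^{-1}} .
\]
Since $\Lambda_k \succeq \lambda I$, the second term is at most $\sqrt{\lambda}B$. It remains to bound the self-normalized martingale $\norm{S_k}_{\Lambda_k^{-1}}$ uniformly in $k$.

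\textbf{Step 2: self-normalized bound (the main obstacle).} Let $V_k = \sum_{i<k}\phi_i\phi_i^\top$.
\begin{itemize}
\item For each fixed $x\in\R^d$, the process $M_k^x = \exp\bigl(x^\top S_k/R - \tfrac12\norm{x}_{V_k}^2\bigr)$ is a nonnegative supermartingale. This uses conditional $R$-sub-Gaussianity of $\varepsilon_i$, together with the fact that $\phi_i$ is chosen before $\varepsilon_i$ is revealed.
\item Mix over $x\sim\mathcal N(0,\lambda^{-1}I)$. The mixture $\bar M_k = \int M_k^x\,dh(x)$ is again a supermartingale with $\E\bar M_k\le 1$. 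Completing the square gives the closed form
\[
\bar M_k = \paren*{\frac{\det \lambda I}{\det\Lambda_k}}^{1/2}\exp\paren*{\frac{\norm{S_k}^2_{\Lambda_k^{-1}}}{2R^2}} .
\]
\item To make the bound hold for all $k$ simultaneously, apply Markov's inequality to $\bar M_\tau$, where $\tau$ is the first time the bound fails. Fatou's lemma and optional stopping give $\E\bar M_\tau\le1$.
\end{itemize}
The conclusion is that with probability at least $1-\delta$, for all $k$,
\[
\norm{S_k}_{\Lambda_k^{-1}}^2 \le 2R^2\ln\paren*{\frac{\det(\Lambda_k)^{1/2}\det(\lambda I)^{-1/2}}{\delta}} .
\]

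\textbf{Step 3: determinant bound.} By AM--GM on the eigenvalues and $\norm{\phi_i}_2\le1$,
\[
\det\Lambda_k \le \paren*{\lambda + (k-1)/d}^d .
\]
Hence $\ln\bigl(\det\Lambda_k/\lambda^d\bigr)\le d\ln(1+k/\lambda)$. Substituting this into Step 2 yields
\[
\norm{S_k}_{\Lambda_k^{-1}} \le R\sqrt{2\ln(1/\delta) + d\ln(1+k/\lambda)} .
\]
For $\delta\le 1/e$, say, and $d\ge 1$, this is at most $R\sqrt{d\ln((1+k/\lambda)/\delta)}$, possibly up to an absolute constant that is absorbed into the stated form of $\beta_k$. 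Combined with Step 1, this gives $\P(\sE)\ge1-\delta$.

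\textbf{Step 4: reward bound.} On $\sE$, apply Cauchy--Schwarz in the $\Lambda_k$ geometry:
\[
\abs{\phi^\top(\theta^\star-\hat\theta_k)} \le \norm{\phi}_{\Lambda_k^{-1}}\norm{\hat\theta_k-\theta^\star}_{\Lambda_k} \le \beta_k\norm{\phi}_{\Lambda_k^{-1}} ,
\]
which is \eqref{eq:reward-confidence}.

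Step 2 is the delicate part. The difficulty is that $\phi_i$ depends on past noise, so a fixed-design concentration bound cannot be used; the supermartingale mixture plus stopping-time argument is what handles this.
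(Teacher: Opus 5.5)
Your argument follows the standard route, and it is essentially what the paper relies on. The paper gives no proof of its own; it cites the confidence-ellipsoid theorem (Theorem~2) of \citet{abbasi2011improved}, whose proof is exactly your decomposition, method-of-mixtures bound with a stopping time, trace--determinant inequality, and Cauchy--Schwarz. Steps~1, 2, 4 and the determinant bound are correct.

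The step that fails as written is the final comparison in Step~3. The inequality
\[
2\ln(1/\delta) + d\ln(1+k/\lambda) \le d\ln\bigl((1+k/\lambda)/\delta\bigr)
\]
is equivalent to $(d-2)\ln(1/\delta)\ge 0$. It therefore holds for every $\delta\le 1$ once $d\ge 2$, with no constant needed; the restriction $\delta\le 1/e$ plays no role. For $d=1$ it fails for every $\delta<1$.

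Nor can a constant be ``absorbed'': $\beta_k$ is given explicitly, and for $d=1$ the stated radius is genuinely too small. Take the following instance:
\begin{itemize}
\item $\mathcal{A}=\{1\}\subset\mathbb{R}$ and $\theta^\star=0$, with any fixed $B$;
\item noise $\mathcal{N}(0,R^2)$;
\item $\lambda=1$ and $k=3$.
\end{itemize}
Then $\|\hat\theta_3-\theta^\star\|_{\Lambda_3}=R\sqrt{2/3}\,|Z|$ with $Z$ standard normal. The probability that this exceeds $\beta_3=\sqrt{\lambda}B+R\sqrt{\ln(4/\delta)}$ is $\delta^{3/4+o(1)}$ as $\delta\to 0$, which is larger than $\delta$. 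So $\mathbb{P}(\sE)\ge 1-\delta$ fails for small $\delta$.

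This defect is inherited from the simplified form of the cited theorem, so it is a flaw in the statement rather than in your method. What your argument proves, for every $d$, is the radius $\sqrt{\lambda}B+R\sqrt{2\ln(1/\delta)+d\ln(1+k/\lambda)}$. This is dominated by the stated $\beta_k$ when $d\ge 2$. It is also still $O(\sqrt{d\ln(k/\delta)})$, so the ATS regret bound is unaffected. You should either state the lemma with this radius or restrict to $d\ge 2$, rather than appeal to an absorbed constant.
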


\looseness=-1
Using \cref{eq:reward-confidence}, UCB selects the arm with the largest upper confidence bound:
\begin{align}\label{eq:UCB}
    \text{(UCB)}\qquad
\phi_k^{\mathrm{UCB}} \in \argmax_{\phi\in\A}\ \phi^\top\hat{\theta}_k + \beta_k\norm{\phi}_{\Lambda_k^{-1}}.
\end{align}
This rule embodies the \emph{Optimism in the Face of Uncertainty} (OFU) principle: it favors arms with high estimated rewards and large uncertainty, thereby balancing exploitation and exploration.
UCB has a regret of $\cO(d\sqrt{K}\ln (K\delta^{-1}))$ with probability at least $1-\delta$ \citep{abbasi2011improved}.

\paragraph{Thompson sampling.}
TS maintains a posterior distribution over the unknown parameter $\theta^\star$ and acts greedily with respect to a sampled parameter.
When Gaussian priors and likelihoods are used for estimating $\theta^\star$, the posterior distribution is also Gaussian, and TS selects the arm by:
\begin{align}\label{eq:TS}
\text{(TS)}\quad
\phi_k^{\TS} \in \argmax_{\phi \in \A} \ f(\phi, \xi_k) \coloneqq
\phi^\top \hat{\theta}_k + \beta_k \phi^\top \Lambda_k^{-1/2}\xi_k
\quad\text{where} \quad
\xi_k \sim \cN(\bzero,I).
\end{align}

\looseness=-1
TS is often preferred in practice because it is computationally simpler and easier to implement. For a fixed Gaussian sample, \cref{eq:TS} reduces to a linear maximization over $\A$, whereas UCB in \cref{eq:UCB} maximizes a linear term plus an ellipsoidal bonus.

\looseness=-1
However, regret analysis for TS is more challenging than for UCB. Unlike UCB, TS may select non-optimistic arms, which requires a more delicate analysis of non-optimistic rounds. A typical proof strategy first shows that TS samples an optimistic parameter with constant probability $p > 0$ via anti-concentration arguments (e.g., Lemma 2 in \citealp{agrawal2013thompson}), and then scales the optimistic regret by $1/p$ (e.g., \emph{master theorem} in \citealp{janz2023ensemble}). This yields a regret bound of $\cO(p^{-1}\sqrt{d^3 K}\ln (K\delta^{-1}))$ \citep{agrawal2013thompson,abeille2017linear}.

\section{Absolute Thompson Sampling}\label{sec:ATS}

\looseness=-1
As explained in the previous section, UCB admits a simple regret analysis but can be computationally intensive, whereas TS is computationally efficient but requires a more involved analysis. To bridge this gap, we design a randomized algorithm that retains the computational efficiency of TS while allowing a regret analysis closer to that of UCB.

\looseness=-1
Our key idea is to make a minimal modification to TS so that it behaves more like UCB.
Specifically, we use the following equality:
For any symmetric positive definite matrix $\Lambda$ and vector $\phi \in \mathbb{R}^d$, 
\begin{align}\label{eq:TS-to-UCB}
\E_{\xi \sim \cN(\bzero, I)}\brack*{\abs*{\phi^\top \Lambda^{-1/2}\xi}} 
= \sqrt{\frac{2}{\pi}}\norm*{\phi}_{\Lambda^{-1}} \;.
\end{align}
Thus, for a fixed $\phi$, the absolute inner-product term in TS, namely $\abs{\phi^\top \Lambda_k^{-1/2} \xi_k}$, behaves like the UCB bonus term $\|\phi\|_{\Lambda_k^{-1}}$ in expectation.

\looseness=-1
Motivated by this observation, we replace the signed inner product in \cref{eq:TS} with its absolute value. 
\begin{align}\label{eq:TS-abs}
\text{(ATS)}\quad
\phi_k^{\ATS} \in \argmax_{\phi \in \A} \ \phi^\top \hat{\theta}_k + \sqrt{\frac{\pi}{2}}\beta_k \abs*{\phi^\top \Lambda_k^{-1/2}\xi_k}
\quad\text{where} \quad
\xi_k \sim \cN(\bzero,I).
\end{align}
We call this algorithm absolute Thompson sampling (ATS).

\subsection{Properties of ATS}

\looseness=-1
Let $\cF_k \coloneqq \sigma(\xi_1,\varepsilon_1,\ldots,\xi_k,\varepsilon_k)$ be the filtration such that $\varepsilon_k$ and $\xi_k$ are $\cF_k$-measurable. By slightly abusing notation, we use the shorthand $\E_{\xi_k}\brack*{\cdot} \df \E\brack*{\cdot \given \cF_{k-1}}$. 

\begin{remark}[Optimism in expectation]\label{remark:optimism-ATS}
\looseness=-1
UCB pulls the arm with the largest upper confidence bound, which provides an upper bound on the optimal reward (see \cref{eq:reward-confidence}). The ATS update rule is \emph{more optimistic than UCB in expectation}, as the following lemma shows.

\begin{lemma}\label{lemma:optimism-ATS}
Suppose that the event $\sE$ in \cref{lemma:confidence-ellipsoid} holds. 
Then, for any $k$,
\begin{align*}
    \paren*{\phi^\star}^\top\theta^\star \leq \paren*{\phi_k^{\UCB}}^\top\hat{\theta}_k + \beta_k\norm*{\paren*{\phi_k^{\UCB}}^\top}_{\Lambda_k^{-1}}
    \leq \E_{\xi_k}\brack*{\paren*{\phi^{\ATS}_k}^\top\hat{\theta}_{k} + \sqrt{\frac{\pi}{2}}\beta_k \abs*{\paren*{\phi^{\ATS}_k}^\top\Lambda_k^{-1/2} \xi_k}}\;.
\end{align*}
\end{lemma}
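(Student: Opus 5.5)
The proof splits into two inequalities, each handled with one tool.

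\emph{First inequality.} On the event $\sE$, \cref{eq:reward-confidence} applied at $\phi=\phi^\star$ gives
\[
(\phi^\star)^\top\theta^\star \le (\phi^\star)^\top\hat\theta_k + \beta_k\norm{\phi^\star}_{\Lambda_k^{-1}}.
\]
The right-hand side is the UCB objective evaluated at $\phi^\star$. It is therefore at most its maximum over $\A$, which is attained at $\phi_k^{\UCB}$ by \cref{eq:UCB}. This settles the first inequality; no randomness is involved.

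\emph{Second inequality.} Define the random ATS objective
\[
g(\phi,\xi_k)\coloneqq \phi^\top\hat\theta_k+\sqrt{\pi/2}\,\beta_k\abs{\phi^\top\Lambda_k^{-1/2}\xi_k}.
\]
By definition $\phi_k^{\ATS}$ maximizes $g(\cdot,\xi_k)$ over $\A$, so pointwise in $\xi_k$ we have $g(\phi_k^{\ATS},\xi_k)\ge g(\phi_k^{\UCB},\xi_k)$. Take $\E_{\xi_k}$ of both sides.

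The key observation is that $\phi_k^{\UCB}$, $\hat\theta_k$, $\Lambda_k$ and $\beta_k$ are all $\cF_{k-1}$-measurable. Indeed, $\hat\theta_k$ and $\Lambda_k$ depend only on past arms and rewards, and those past arms are determined by $\xi_1,\dots,\xi_{k-1}$ and $\varepsilon_1,\dots,\varepsilon_{k-1}$. The fresh sample $\xi_k\sim\cN(\bzero,I)$ is independent of $\cF_{k-1}$. Hence, conditionally on $\cF_{k-1}$, the vector $\phi_k^{\UCB}$ is fixed, and \cref{eq:TS-to-UCB} applies:
\[
\E_{\xi_k}\brack*{\abs{(\phi_k^{\UCB})^\top\Lambda_k^{-1/2}\xi_k}}=\sqrt{2/\pi}\,\norm{\phi_k^{\UCB}}_{\Lambda_k^{-1}}.
\]
The $\sqrt{\pi/2}$ factor cancels, so $\E_{\xi_k}[g(\phi_k^{\UCB},\xi_k)]$ equals the UCB value. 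Combining this with the pointwise domination $g(\phi_k^{\ATS},\xi_k)\ge g(\phi_k^{\UCB},\xi_k)$ yields the second inequality.

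Identity \cref{eq:TS-to-UCB} itself follows because $\phi^\top\Lambda^{-1/2}\xi\sim\cN(0,\norm{\phi}_{\Lambda^{-1}}^2)$ and $\E\abs{Z}=\sqrt{2/\pi}\,\sigma$ for $Z\sim\cN(0,\sigma^2)$.

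\emph{Technical points.} The only real care needed is in the second step, for two reasons.
\begin{itemize}
\item We must \emph{not} apply \cref{eq:TS-to-UCB} to $\phi_k^{\ATS}$, because that arm depends on $\xi_k$. The argument instead compares against the $\xi_k$-independent arm $\phi_k^{\UCB}$ before taking expectations.
\item The expectation of $g(\phi_k^{\ATS},\xi_k)$ must be well defined, so we need a measurable selection of the argmax. Compactness of $\A$ and continuity of $g$ guarantee that the maximum exists and that a measurable maximizer can be chosen. The integrand is integrable because it is bounded by $\norm{\hat\theta_k}+\sqrt{\pi/2}\,\beta_k\norm{\Lambda_k^{-1/2}}\norm{\xi_k}$.
\end{itemize}
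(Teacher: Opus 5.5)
Your proposal is correct and matches the paper's proof step for step. The first inequality uses the confidence bound at $\phi^\star$ plus the maximality of $\phi_k^{\UCB}$, and the second applies \cref{eq:TS-to-UCB} to the $\cF_{k-1}$-measurable arm $\phi_k^{\UCB}$ and then uses the pointwise domination given by the ATS selection rule; your remarks on measurable selection and integrability are extra rigor that the paper leaves implicit.
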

\begin{proof} 
It holds that
\begin{align*}
    0 &\numeq{\geq}{a} \paren*{\phi^\star}^\top\theta^\star - \left[\paren*{\phi_k^{\UCB}}^\top\hat{\theta}_k + \beta_k\norm*{\paren*{\phi_k^{\UCB}}^\top}_{\Lambda_k^{-1}}\right]\\
    &\numeq{=}{b} \paren*{\phi^\star}^\top\theta^\star - \E_{\xi_k}\brack*{\paren*{\phi^{\UCB}_k}^\top\hat{\theta}_{k} + \sqrt{\frac{\pi}{2}}\beta_k \abs*{\paren*{\phi^{\UCB}_k}^\top\Lambda_k^{-1/2} \xi_k}}\\
    &\numeq{\geq}{c}\paren*{\phi^\star}^\top\theta^\star - \E_{\xi_k}\brack*{\paren*{\phi_k^{\ATS}}^\top \hat{\theta}_k + \sqrt{\frac{\pi}{2}}\beta_k \abs*{\paren*{\phi_k^{\ATS}}^\top \Lambda_k^{-1/2}\xi_k}}\;,
\end{align*}
where (a) follows from the definition of $\phi_k^{\UCB}$ in \cref{eq:UCB} together with the reward confidence bound \cref{eq:reward-confidence} under $\sE$, (b) uses \cref{eq:TS-to-UCB}, and (c) follows from the arm-selection rule of ATS.
\end{proof}
\end{remark}

\begin{remark}[Computational efficiency]\label{remark:ATS}
\looseness=-1
The additional absolute value only minimally changes the implementation of TS. Indeed, \cref{eq:TS-abs} is equivalent to evaluating both signs of $\xi_k$ and selecting the better one. Specifically, let
\begin{align*}
\phi_k^+ \in \argmax_{\phi \in \A} f\left(\phi, \sqrt{\tfrac{\pi}{2}}\xi_k\right) \quad \text{and}\quad
\phi_k^- \in \argmax_{\phi \in \A} f\left(\phi, -\sqrt{\tfrac{\pi}{2}}\xi_k\right),
\end{align*}
where $f(\phi,\xi)$ denotes the TS objective defined in \cref{eq:TS}. Then
\begin{align}\label{eq:ATS-simple}
\phi_k^{\ATS} = 
\begin{cases}
\phi_k^+ & \text{if } f\left(\phi_k^+, \sqrt{\tfrac{\pi}{2}}\xi_k\right) \geq f\left(\phi_k^-, -\sqrt{\tfrac{\pi}{2}}\xi_k\right)\\
\phi_k^- & \text{otherwise}    
\end{cases}\;.
\end{align}
This implementation requires only one additional call to the optimization oracle compared to standard TS. 
The full procedure is summarized in \cref{alg:TS-abs}.
\end{remark}

\begin{algorithm}[t!]
\caption{Absolute Thompson Sampling (ATS)}
\label{alg:TS-abs}
\LinesNumbered
    \For{$k = 1, \dots, K$}{
        Update Gram matrix $\Lambda_k = \lambda I + \sum_{i=1}^{k-1} \phi_i \phi_i^\top$\;
        Sample $\xi_k \sim \cN(\bzero, I)$\;
        Compute $\phi_k \in \argmax_{\phi \in \A} \ \phi^\top \hat{\theta}_k + \sqrt{\frac{\pi}{2}}\beta_k \abs*{\phi^\top \Lambda_k^{-1/2}\xi_k}$. See \cref{remark:ATS}\;
        Pull $\phi_k$ and observe $r_k$\;
    }
\end{algorithm}

\subsection{Regret Analysis}

\looseness=-1
The minimal algorithmic modification by absolute value significantly simplifies the regret analysis of TS. This section proves that \cref{alg:TS-abs} achieves sublinear regret with high probability.
\begin{theorem}\label{thm:ATS}
   For any $\delta > 0$, with probability at least $1-\delta$, for any $K \geq 1$, \cref{alg:TS-abs} achieves regret of $\regret(K) = \cO\paren*{\sqrt{d^3 K}\ln (K\delta^{-1})}$.
\end{theorem}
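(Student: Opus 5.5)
Work on the event $\sE$ of \cref{lemma:confidence-ellipsoid} and write $\phi_k=\phi_k^{\ATS}$ and $\tilde u_k(\phi)\coloneqq \phi^\top\hat\theta_k+\sqrt{\pi/2}\,\beta_k\abs{\phi^\top\Lambda_k^{-1/2}\xi_k}$. By \cref{lemma:optimism-ATS}, $r^\star\le \E_{\xi_k}[\tilde u_k(\phi_k)]$, so the per-round regret splits as
\begin{align*}
r^\star-r(\phi_k)\le \paren*{\E_{\xi_k}[\tilde u_k(\phi_k)]-\tilde u_k(\phi_k)}+\paren*{\tilde u_k(\phi_k)-r(\phi_k)}.
\end{align*}
The first term is a martingale difference sequence with respect to $\cF_k$. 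The second term is bounded under $\sE$ by $\beta_k\norm{\phi_k}_{\Lambda_k^{-1}}+\sqrt{\pi/2}\,\beta_k\abs{\phi_k^\top\Lambda_k^{-1/2}\xi_k}\le \beta_k(1+\sqrt{\pi/2}\,\norm{\xi_k}_2)\norm{\phi_k}_{\Lambda_k^{-1}}$, using Cauchy--Schwarz.

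To control $\norm{\xi_k}_2$, intersect $\sE$ with the event that $\norm{\xi_k}_2\le \gamma_k\coloneqq\sqrt{d}+\sqrt{2\ln(2k^2/\delta)}$ for all $k$. Gaussian norm concentration and a union bound make this event hold with probability at least $1-\delta/2$. On this event, the second term summed over $k$ is at most $\beta_K(1+\sqrt{\pi/2}\,\gamma_K)\sum_k\norm{\phi_k}_{\Lambda_k^{-1}}$. To apply the elliptical potential lemma, I will need $\min(1,\cdot)$; taking $\lambda\ge1$ gives $\norm{\phi}_{\Lambda_k^{-1}}\le1$, which handles this. Then Cauchy--Schwarz with the elliptical potential lemma of \citet{abbasi2011improved} gives $\sum_k\norm{\phi_k}_{\Lambda_k^{-1}}\le\sqrt{2dK\ln(1+K/(d\lambda))}$. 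Since $\beta_K=\cO(\sqrt{d\ln(K/\delta)})$ and $\gamma_K=\cO(\sqrt d+\sqrt{\ln(K/\delta)})$, this term is $\cO(d^{3/2}\sqrt K\ln(K/\delta))$.

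The main obstacle is the martingale term, because $\tilde u_k(\phi_k)=\max_\phi\tilde u_k(\phi)$ is not bounded. My approach is to bound it through the same structure. Write $\tilde u_k(\phi_k)-r(\phi_k)$ plus $r(\phi_k)$; the regret is nonnegative and bounded by $2B$. Instead, apply Freedman or Azuma to the truncated variables $Z_k=\max_\phi\tilde u_k(\phi)\mathbbm 1\{\norm{\xi_k}_2\le\gamma_k\}$, and account separately for the tail mass $\E_{\xi_k}[\max_\phi\tilde u_k\mathbbm 1\{\norm{\xi_k}>\gamma_k\}]$, which is $\cO(\delta/k^2)$-small and sums to a constant.

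The remaining difficulty is that the crude range of $Z_k$, namely $\beta_k\gamma_k\max_\phi\norm{\phi}_{\Lambda_k^{-1}}$, does not shrink. Azuma with this range would give $\cO(\beta_K\gamma_K\sqrt{K\ln(1/\delta)})$, which is still $\cO(d\sqrt{K}\ln(K/\delta))$ and is therefore absorbed. So plain Azuma–Hoeffding suffices. After a union bound over $\sE$, the norm event and the martingale event, rescaling $\delta$ gives the stated $\cO(\sqrt{d^3K}\ln(K\delta^{-1}))$, uniformly in $K$ via $\beta_k$ and $\gamma_k$ increasing. One point to verify is that centering $\max_\phi\tilde u_k(\phi)$ by its range also uses $\abs{\phi^\top\hat\theta_k}\le \norm{\hat\theta_k}_2$. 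This is bounded under $\sE$ by $B+\beta_k/\sqrt\lambda$, which is of the same order.
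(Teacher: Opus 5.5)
Your argument is sound in structure but follows a different route from the paper's. The paper never controls $\xi_k$ pathwise. After dropping the nonpositive optimism gap, it bounds $r^\star-r(\phi_k)$ by $\mathbb{E}_{\xi_k}[\phi_k^\top\hat\theta_k]-\phi_k^\top\theta^\star$ plus $\sqrt{\pi/2}\,\beta_k\,\mathbb{E}_{\xi_k}|\phi_k^\top\Lambda_k^{-1/2}\xi_k|$. It bounds the latter by $\sqrt{d}\,(\mathbb{E}_{\xi_k}\|\phi_k\|^2_{\Lambda_k^{-1}})^{1/2}$, using only $\mathbb{E}\|\xi\|_2^2=d$. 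It then converts conditional expectations into realizations by concentration on \emph{bounded} quantities only: Azuma for $\phi_k^\top\hat\theta_k$ (bounded by $B+\beta_K/\sqrt\lambda$ on the confidence event), and Lemma D.4 of Rosenberg et al.\ for $\|\phi_k\|^2_{\Lambda_k^{-1}}\in[0,1/\lambda]$. You instead put the whole randomized index $U_k=\max_\phi\tilde u_k(\phi)$ into the martingale and bound the bonus pathwise via $\|\xi_k\|_2\le\gamma_k$, as in classical TS proofs. This removes the need for the Rosenberg lemma. The cost is that you must handle an unbounded martingale (truncation plus tail mass), and you pick up logarithmic factors that the paper's expectation-level Cauchy--Schwarz avoids.

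Those factors are where your last step is not right as written. Set $L=\ln(K/\delta)$; then $\beta_K\gamma_K\asymp d\sqrt{L}+\sqrt{d}\,L$.
\begin{itemize}
\item The pathwise bonus term contains $dL\sqrt{K\ln(1+K/(\lambda d))}$. This is $\mathcal{O}(\sqrt{d^3K}\,L)$ only when $\ln(1+K/(\lambda d))\lesssim d$.
\item The truncated Azuma term $\beta_K\gamma_K\sqrt{K\ln(K^2/\delta)}$ contains $\sqrt{dK}\,L^{3/2}$. This is not $\mathcal{O}(d\sqrt{K}L)$ as you claim, and it is absorbed into $\sqrt{d^3K}\,L$ only when $L\lesssim d^2$.
\end{itemize}
So as written you get the stated rate only up to an extra $\sqrt{\log}$ factor for large $K$. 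Both terms are easy to repair.
\begin{itemize}
\item For the bonus, use Cauchy--Schwarz across rounds instead of $\max_k\|\xi_k\|_2$: $\sum_k\|\xi_k\|_2\|\phi_k\|_{\Lambda_k^{-1}}\le(\sum_k\|\xi_k\|_2^2)^{1/2}(\sum_k\|\phi_k\|_{\Lambda_k^{-1}}^2)^{1/2}$. Here $\sum_{k\le K}\|\xi_k\|_2^2\sim\chi^2_{dK}$ is at most $2dK+\mathcal{O}(\ln(K^2/\delta))$ with high probability, uniformly in $K$.
\item For the martingale, note that $\xi\mapsto U_k$ is $\sqrt{\pi/2}\,\beta_k\lambda^{-1/2}$-Lipschitz, being a maximum of such functions. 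Hence $\mathbb{E}_{\xi_k}[U_k]-U_k$ is conditionally sub-Gaussian at that deterministic scale. The sub-Gaussian Azuma inequality then gives $\mathcal{O}(\beta_K\sqrt{KL})$, with no truncation and no $\gamma_k$.
\end{itemize}
Two slips are harmless: $\sum_k\delta/(2k^2)=\pi^2\delta/12>\delta/2$, and the tail mass sums to $\mathcal{O}((B+\beta_K\gamma_K)\delta)$ rather than a constant.
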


\begin{proof}
\looseness=-1
The proof mirrors the regret analysis of UCB \citep{abbasi2011improved}: Decompose the regret with optimism (\textbf{Step 1}) and then bound the remaining terms via the elliptical potential lemma (\textbf{Step 3}). Since the optimism holds in expectation, we add one step to remove the expectations over the samples $\xi_k$ before applying the elliptical potential lemma (\textbf{Step 2}). 
Notably, unlike the standard analysis of TS (e.g., \citealp{agrawal2013thompson,abeille2017linear}), \textbf{we do not need to establish a constant probability of optimism or separately handle non-optimistic rounds}, which significantly simplifies the proof.

\paragraph{Step 1. Regret decomposition with optimism.}
\looseness=-1
Suppose that the event $\sE$ in \cref{lemma:confidence-ellipsoid} holds.
By adding and subtracting the expected objective of ATS $\E_{\xi_k}\brack*{\phi_k^\top \hat{\theta}_k + \sqrt{\frac{\pi}{2}}\beta_k \abs*{\phi_k^\top \Lambda_k^{-1/2}\xi_k}}$, we decompose the regret by:
\begin{equation}
\begin{aligned}
\regret(K)
=& \sum_{k=1}^K \underbrace{(\phi^\star)^\top\theta^\star - \E_{\xi_k}\brack*{\phi_k^\top \hat{\theta}_k + \sqrt{\frac{\pi}{2}}\beta_k \abs*{\phi_k^\top \Lambda_k^{-1/2}\xi_k}}}_{\leq 0 \text{ by \cref{lemma:optimism-ATS}}}\\
&+\underbrace{\sum_{k=1}^K \paren*{\E_{\xi_k}\brack*{\phi_k^\top \hat{\theta}_k}  - \phi_k^\top\theta^\star}}_{\fd \circled{1}} + \underbrace{\sqrt{\frac{\pi}{2}}\sum_{k=1}^K \beta_k \E_{\xi_k}\brack*{\abs*{\phi_k^\top \Lambda_k^{-1/2}\xi_k}}}_{ \eqqcolon \circled{2}} \;.\label{eq:regret-init}
\end{aligned}
\end{equation}

\looseness=-1
Therefore, it suffices to bound $\circled{1}$ and $\circled{2}$. The term $\circled{2}$ is bounded by
\begin{align}
\circled{2}
&\numeq{\leq}{a} \sqrt{\frac{\pi}{2}}\beta_K \sum_{k=1}^K\sqrt{\E_{\xi_k}\brack*{\norm{\xi_k}_2^2}} \sqrt{\E_{\xi_k}\brack*{\norm{\phi_k}_{\Lambda^{-1}_k}^2}}\label{eq:loose-bound-ATS}\\
&\numeq{=}{b} \sqrt{\frac{\pi d}{2}}\beta_K \sum_{k=1}^K\sqrt{\E_{\xi_k}\brack*{\norm{\phi_k}_{\Lambda^{-1}_k}^2}}
\numeq{\leq}{c} \sqrt{\frac{\pi d}{2}}\beta_K \sqrt{K\sum_{k=1}^K\E_{\xi_k}\brack*{\norm{\phi_k}_{\Lambda^{-1}_k}^2}}\;,\nonumber
\end{align}
where (a) uses the Cauchy--Schwarz inequality: $\E[X^\top Y] \leq \E[\norm{X}_2 \norm{Y}_2]\leq \sqrt{\mathbb{E}\|X\|_2^2} \sqrt{\mathbb{E}\|Y\|_2^2}$, (b) uses $\E_{\xi}\brack{\norm{\xi}_2^2} = d$ for $\xi \sim \cN(\bzero, I)$, and (c) uses Cauchy--Schwarz again.

\looseness=-1
By inserting the bound of $\circled{2}$ into \cref{eq:regret-init},
\begin{align}\label{eq:regret-bandit-mid}
    \regret(K)
\leq \underbrace{\sum_{k=1}^K \E\brack*{\phi_k^\top\hat{\theta}_{k} \given \cF_{k-1}} - \phi_k^\top\theta^\star}_{=\circled{1}} + \underbrace{\sqrt{\frac{\pi d}{2}}\beta_K\sqrt{K \sum_{k=1}^K \E\brack*{\norm{\phi_k}_{\Lambda^{-1}_k}^2\given \cF_{k-1}}}}_{\fd \circled{3}}\;.
\end{align}

\paragraph{Step 2. Removing expectations.}
\looseness=-1
The confidence bound \cref{eq:reward-confidence} suggests that $\circled{3}$ is on the order of $\sum^K_{k=1} \norm{\phi_k}_{\Lambda^{-1}_k}$.
This motivates us to bound $\circled{1}$ and $\circled{3}$ using the elliptical potential lemma:
\begin{lemma}[Elliptical potential; \citealp{abbasi2011improved}]\label{lemma:elliptical potential} For all $K\geq 1$ and $\lambda \geq 1$,
$$
\sum_{k=1}^K \left\|\phi_{k}\right\|_{\Lambda_{k}^{-1}}^2 \leq 2 d \ln \left(1+\frac{K}{\lambda d}\right)
\implies
\sum_{k=1}^K \left\|\phi_{k}\right\|_{\Lambda_{k}^{-1}} \leq \sqrt{2 d K \ln \left(1+\frac{K}{\lambda d}\right)}\;.
$$
\end{lemma}
However, to use \cref{lemma:elliptical potential}, we need to remove the expectations in $\circled{1}$ and $\circled{3}$ over the samples $\xi_k$. We use the following lemma to convert expectations to concrete realizations.

\begin{lemma}\label{lemma:exp-to-concrete-events}
\looseness=-1
Let $\sE_1$ and $\sE_2$ be the events such that, for any $K \geq 1$,
\begin{align}
    \sE_1: \qquad & 
    \sum_{k=1}^K \E\brack*{\phi_k^\top\hat{\theta}_{k} \given \cF_{k-1}} \leq \sum_{k=1}^K \phi_k^\top\hat{\theta}_{k} + 2\paren*{B + \frac{\beta_K}{\sqrt{\lambda}}} \sqrt{K \ln \frac{\pi^2 K^2}{6\delta}}\;,\label{eq:exp-to-concrete-1}\\
    \sE_2: \qquad & 
    \sum_{k=1}^K \E\brack*{\norm{\phi_k}_{\Lambda^{-1}_k}^2\given \cF_{k-1}}
    \leq 2\sum_{k=1}^K \norm{\phi_k}_{\Lambda^{-1}_k}^2 + \frac{4}{\lambda} \ln \frac{2K}{\delta}\;,\label{eq:exp-to-concrete-2}
\end{align}
where $\delta > 0$. Then, for any $\delta>0$, $\P\paren*{\sE \cap \sE_1^c} \leq \delta$ and $\P\paren*{\sE_2^c} \leq \delta$.
\end{lemma}
\begin{proof}
\looseness=-1
For the first claim, under the good event $\sE$, we have
$$
\abs*{\phi_k^\top\hat{\theta}_{k}}\leq 
\norm{\hat{\theta}_k}_2 \leq \norm{\theta^\star}_2 + \norm{\hat{\theta}_k - \theta^\star}_2 \leq 
\norm{\theta^\star}_2 + \frac{1}{\sqrt{\lambda}}\norm{\hat{\theta}_k - \theta^\star}_{\Lambda_k} \leq B + \frac{\beta_K}{\sqrt{\lambda}}
\;.
$$
Then, the first claim is followed by applying a version of Azuma--Hoeffding (\cref{lemma:hoeffding-and-good}) to the martingale difference sequence $\E\brack*{\phi_k^\top\hat{\theta}_{k} \given \cF_{k-1}} - \phi_k^\top\hat{\theta}_{k}$.
The second claim uses Lemma D.4 in \citet{rosenberg2020near} (see \cref{lemma:exp-to-concrete}) with the fact that $\norm{\phi_k}_{\Lambda^{-1}_k}^2 \leq \frac{1}{\lambda}$.
\end{proof}

\paragraph{Step 3. Applying elliptical potential lemma.}
\looseness=-1
Now suppose that the event $\sE\cap \sE_1\cap\sE_2$ holds. This happens with probability at least $1-3\delta$.\footnote{
    This is because $\P\paren*{\sE\cap \sE_1 \cap \sE_2} = \P\paren*{\sE \cap \sE_1} - \P\paren*{\sE \cap \sE_1 \cap \sE_2^c} \geq 1-2\delta - \P\paren*{\sE_2^c} \geq 1-3\delta$.
} Then the first term $\circled{1}$ in \cref{eq:regret-bandit-mid} is bounded by
\begin{align*}
\circled{1}
\numeq{\leq}{a} \sum_{k=1}^K \phi_k^\top\hat{\theta}_{k} - \phi_k^\top\theta^\star + 2\paren*{B + \frac{\beta_K}{\sqrt{\lambda}}} \sqrt{K \ln \frac{\pi^2 K^2}{6\delta}}
\numeq{\leq}{b} \beta_K \sum_{k=1}^K \norm*{\phi_k}_{\Lambda^{-1}_k} + 2\paren*{B + \frac{\beta_K}{\sqrt{\lambda}}} \sqrt{K \ln \frac{\pi^2 K^2}{6\delta}}\;,
\end{align*}
where (a) uses $\sE_1$ and (b) uses \cref{eq:reward-confidence} under $\sE$.
Using $\sE_2$, we have 
\begin{align*}
\circled{3}
\numeq{\leq}{a} \beta_K\sqrt{\frac{\pi d K}{2}}\sqrt{2\sum_{k=1}^K \norm{\phi_k}_{\Lambda^{-1}_k}^2 + \frac{4}{\lambda} \ln \frac{2K}{\delta}}\;.
\end{align*}

\looseness=-1
Finally, by inserting the bounds for $\circled{1}$ and $\circled{3}$ into \cref{eq:regret-bandit-mid},
\begin{align*}
\regret(K)
\leq & \beta_K {\color{red}\sum_{k=1}^K \norm*{\phi_k}_{\Lambda^{-1}_k}} + 2\paren*{B + \frac{\beta_K}{\sqrt{\lambda}}} \sqrt{K \ln \frac{\pi^2 K^2}{6\delta}}
+ \beta_K\sqrt{\frac{\pi d K}{2}}\sqrt{{\color{red}2\sum_{k=1}^K \norm{\phi_k}_{\Lambda^{-1}_k}^2} + \frac{4}{\lambda} \ln \frac{2K}{\delta}} \\
\numeq{\leq}{a} & \beta_K {\color{red}\sqrt{2 d K \ln \left(1+\frac{K}{\lambda d}\right)}} + 2\paren*{B + \frac{\beta_K}{\sqrt{\lambda}}} \sqrt{K \ln \frac{\pi^2 K^2}{6\delta}} + \beta_K\sqrt{\frac{\pi d K}{2}}\sqrt{{\color{red}4 d \ln \left(1+\frac{K}{\lambda d}\right)} + \frac{4}{\lambda} \ln \frac{2K}{\delta}}\\
\numeq{=}{b} & \cO\paren*{\sqrt{d^3K}\ln (K\delta^{-1})}\;,
\end{align*}
where (a) applies \cref{lemma:elliptical potential} to the red terms and (b) uses $\beta_K = \cO(\sqrt{d\ln K \delta^{-1}})$.
\end{proof}

\section{Ensemble Absolute Thompson Sampling}\label{sec:ensemble}

\looseness=-1
Although ATS is computationally efficient, its regret bound incurs an additional $\sqrt{d}$ factor compared to UCB. This looseness stems from the fact that the key identity \eqref{eq:TS-to-UCB} holds only for a fixed $\phi$ independent of the sample $\xi$, whereas in ATS the selected action $\phi_k^{\ATS}$ depends on the sample $\xi_k$. Consequently, the ATS objective can be overly optimistic in expectation relative to UCB, as suggested by \cref{lemma:optimism-ATS}.

\looseness=-1
In this section, we explore a simple extension of ATS that can mitigate this $\sqrt{d}$ gap by more closely approximating the UCB objective. \textbf{Our analysis provides only asymptotic justification, and a non-asymptotic regret guarantee remains open.} We discuss the details in \cref{subsec:ucb-in-limit}.

\looseness=-1
The key idea is to leverage extreme-value properties of Gaussian random variables.
\begin{proposition}\label{lemma:motivation-EATS}
Let $\xi_1, \dots, \xi_N \sim \mathcal{N}(\bzero,I)$ be i.i.d. Gaussian vectors. For fixed symmetric positive definite matrix $\Lambda \in \R^{d \times d}$ and vector $\phi \in \mathbb{R}^d$, we have
\begin{align}\label{eq:ensemble-to-UCB}
    \max_{i \in [N]} \frac{1}{\sqrt{2\ln N}} \abs*{\phi^\top \Lambda^{-1/2} \xi_i} \Pto \norm*{\phi}_{\Lambda^{-1}}\;.
\end{align}
\end{proposition}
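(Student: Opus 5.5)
The plan is to reduce the statement to a one-dimensional extreme-value fact. For fixed $\phi$ and $\Lambda$, each scalar $Z_i \coloneqq \phi^\top \Lambda^{-1/2}\xi_i$ is Gaussian with mean zero and variance
\[
\phi^\top \Lambda^{-1/2} I \Lambda^{-1/2}\phi = \norm{\phi}_{\Lambda^{-1}}^2 \fd \sigma^2 .
\]
The $Z_i$ are i.i.d. because the $\xi_i$ are. If $\sigma = 0$ (i.e.\ $\phi=\bzero$), both sides of \cref{eq:ensemble-to-UCB} vanish and there is nothing to prove. Otherwise, write $Z_i = \sigma g_i$ with $g_i$ i.i.d.\ standard normal. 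The claim then becomes $M_N \coloneqq \max_{i\in[N]} \abs{g_i}/\sqrt{2\ln N} \Pto 1$, and scaling by the constant $\sigma$ preserves convergence in probability. To show $M_N \Pto 1$, fix $\varepsilon\in(0,1)$ and bound the two tails $\P(M_N > 1+\varepsilon)$ and $\P(M_N < 1-\varepsilon)$ separately.

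\textbf{Upper tail.} Use a union bound together with the Gaussian tail bound $\P(\abs{g}>t)\le 2e^{-t^2/2}$:
\[
\P\bigl(M_N > 1+\varepsilon\bigr) \le N\cdot 2\exp\bigl(-(1+\varepsilon)^2\ln N\bigr) = 2N^{1-(1+\varepsilon)^2} \to 0 .
\]

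\textbf{Lower tail.} By independence,
\[
\P\bigl(M_N < 1-\varepsilon\bigr) = \bigl(1-p_N\bigr)^N \le \exp(-N p_N), \qquad p_N \coloneqq \P\bigl(\abs{g} \ge (1-\varepsilon)\sqrt{2\ln N}\bigr).
\]
So it suffices to show $Np_N\to\infty$. For this I would use the Mills-ratio lower bound $\P(g>t) \ge \frac{t}{1+t^2}\,\frac{e^{-t^2/2}}{\sqrt{2\pi}}$, valid for $t>0$. With $t_N = (1-\varepsilon)\sqrt{2\ln N}$ it gives
\[
Np_N \gtrsim \frac{N^{1-(1-\varepsilon)^2}}{\sqrt{\ln N}} \to \infty,
\]
because $1-(1-\varepsilon)^2>0$ and the polynomial factor dominates the logarithmic one.

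The only delicate step is the lower tail, since it needs an anti-concentration (lower) bound on the Gaussian tail rather than an upper bound. The Mills-ratio inequality handles it. It can be checked directly by differentiating $\frac{t}{1+t^2}e^{-t^2/2}$ and comparing with the density, or simply cited as standard. The rest of the argument is bookkeeping: combine the two tails and note that $\varepsilon$ was arbitrary.
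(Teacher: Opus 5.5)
Your proposal is correct and follows essentially the same route as the paper. You reduce to $\max_i |Z_i|/\sqrt{2\ln N} \Pto 1$ for i.i.d.\ standard normals, bound the upper tail with a union bound and $\P(|Z|>t)\le 2e^{-t^2/2}$, and bound the lower tail with $(1-p_N)^N\le e^{-Np_N}$ plus the Mills-ratio lower bound; your explicit handling of the degenerate case $\phi=\bzero$ is a small addition the paper omits.
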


\looseness=-1
The proof is provided in \cref{sec:proof-of-EATS-motivation}.
Thus, for large $N$ and fixed $\phi$, the left hand side of \cref{eq:ensemble-to-UCB} provides an estimator of the UCB bonus $\|\phi\|_{\Lambda^{-1}}$.
This observation motivates the following algorithm which we call ensemble absolute Thompson sampling (EATS):
\begin{align}
\text{(EATS)}\quad
\phi_k^{\mathrm{EATS}} \in \arg\max_{\phi \in \mathcal{A}}\max_{i\leq N} \ \phi^\top \hat{\theta}_k + \frac{\beta_k}{\sqrt{ 2\ln N }} \lvert {\phi^\top \Lambda_k^{-1/2}\xi_k^i} \rvert 
\quad\text{where} \quad
\xi^i_k \sim \mathcal{N}(0,I).
\end{align}

\subsection{Properties of EATS}\label{section:props_eats}

\begin{remark}[Computation of EATS] \label{remark:EATS}
\looseness=-1
Similar to ATS, EATS can be implemented using the same linear maximization oracle as standard TS. EATS requires $2N$ calls to the maximization oracle per round. For each $i \in [N]$, let
\begin{align*}
\phi_k^{+, i} \in \argmax_{\phi \in \A} f(\phi, \tfrac{1}{\sqrt{2\ln N}}\xi^i_k) \quad \text{and}\quad
\phi_k^{-, i} \in \argmax_{\phi \in \A} f(\phi, -\tfrac{1}{\sqrt{2\ln N}}\xi^i_k),
\end{align*}
Then, EATS can be implemented by picking the arm $\phi_k^{+, i}$ or $\phi_k^{-, i}$ that achieve the maximum value of $f(\phi, \pm \frac{1}{\sqrt{2\ln N}}\xi^i_k)$ across all $i \in [N]$. \cref{alg:EATS} summarizes the full procedure of EATS.
\end{remark}

\begin{remark}[UCB in the limit]\label{subsec:ucb-in-limit}
\looseness=-1
From \cref{lemma:motivation-EATS} and the compactness of $\cA$, it is trivial that the EATS objective closely approximates the UCB objective when $N$ is sufficiently large. Formally, we have the following result.

\begin{proposition}\label{lemma:EATS-is-UCB} For any $k$, when $N\to \infty$, it holds that
\begin{align}\label{eq:EATS-is-UCB}
    \max_{\phi \in \cA}\max_{i \in [N]}  
    \phi^\top \hat{\theta}_k + \frac{\beta_k}{\sqrt{ 2\ln N }} \lvert {\phi^\top \Lambda_k^{-1/2}\xi_k^i} \rvert
    \Pto \max_{\phi \in \cA} \phi^\top \hat{\theta}_k + \beta_k \norm*{\phi}_{\Lambda_k^{-1}}\;.
\end{align}
\end{proposition}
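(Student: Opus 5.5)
Throughout, I condition on $\cF_{k-1}$. Then $\hat{\theta}_k$, $\Lambda_k$ and $\beta_k$ are fixed, and $\xi_k^1,\dots,\xi_k^N$ are i.i.d.\ $\cN(\bzero,I)$ and independent of them. Unconditional convergence in probability then follows by dominated convergence applied to the conditional probabilities. Write
\[
G_N(\phi) \coloneqq \phi^\top\hat{\theta}_k + \tfrac{\beta_k}{\sqrt{2\ln N}}\max_{i\in[N]}\abs{\phi^\top\Lambda_k^{-1/2}\xi_k^i},\qquad U(\phi)\coloneqq\phi^\top\hat{\theta}_k+\beta_k\norm{\phi}_{\Lambda_k^{-1}},
\]
so the claim reads $\sup_{\phi\in\cA}G_N(\phi)\Pto\sup_{\phi\in\cA}U(\phi)$. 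The plan is to prove a lower bound and an upper bound separately. The lower bound comes from one fixed arm. The upper bound comes from a finite net of $\cA$, which reduces the supremum over the compact set to finitely many applications of \cref{lemma:motivation-EATS}.

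\textbf{Lower bound.} Fix $\phi^{\UCB}\in\argmax_{\phi\in\cA}U(\phi)$; it exists by compactness and continuity. It is deterministic given $\cF_{k-1}$, so \cref{lemma:motivation-EATS} applies to it directly and gives $G_N(\phi^{\UCB})\Pto U(\phi^{\UCB})$. Since $\sup_\phi G_N(\phi)\ge G_N(\phi^{\UCB})$, for every $\varepsilon>0$ we get $\P(\sup_\phi G_N<\max_\phi U-\varepsilon)\to0$.

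\textbf{Upper bound.} This is the main obstacle. The crude bound $\abs{\phi^\top\Lambda^{-1/2}\xi}\le\norm{\phi}_{\Lambda^{-1}}\norm{\xi}_2$ inflates the bonus by roughly $\sqrt{d}$ if it is applied to the full vector. I will therefore use it only on small residuals. Fix $\varepsilon>0$ and take a finite $\varepsilon$-net $\cN_\varepsilon\subset\cA$ in the norm $\norm{\cdot}_{\Lambda_k^{-1}}$; it exists because $\cA$ is compact. For $\phi\in\cA$ choose $\psi\in\cN_\varepsilon$ with $\norm{\phi-\psi}_{\Lambda_k^{-1}}\le\varepsilon$. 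Then, using the triangle inequality inside the maximum and Cauchy--Schwarz on the residual,
\[
G_N(\phi)\le G_N(\psi)+\norm{\hat{\theta}_k}_{\Lambda_k}\varepsilon+\beta_k\varepsilon\,\frac{\max_{i\in[N]}\norm{\xi_k^i}_2}{\sqrt{2\ln N}}.
\]
For the last factor, $\norm{\xi}_2$ is $1$-Lipschitz in $\xi$, so Gaussian concentration gives $\P(\norm{\xi}_2\ge\sqrt d+t)\le e^{-t^2/2}$. A union bound with $t=\sqrt{2(1+\eta)\ln N}$ then yields $\max_i\norm{\xi_k^i}_2/\sqrt{2\ln N}\le 2$ with probability tending to one. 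This holds for every fixed $d$, because $\sqrt d/\sqrt{2\ln N}\to0$.

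For the net points, $\cN_\varepsilon$ is finite, so \cref{lemma:motivation-EATS} and a union bound over $\cN_\varepsilon$ give $\max_{\psi\in\cN_\varepsilon}G_N(\psi)\Pto\max_{\psi\in\cN_\varepsilon}U(\psi)\le\max_{\phi\in\cA}U(\phi)$. Combining the two displays, with probability tending to one,
\[
\sup_\phi G_N(\phi)\le\max_\phi U(\phi)+\varepsilon+\varepsilon\bigl(\norm{\hat{\theta}_k}_{\Lambda_k}+2\beta_k\bigr).
\]
Since $\varepsilon$ is arbitrary and $\norm{\hat{\theta}_k}_{\Lambda_k}$ and $\beta_k$ are fixed given $\cF_{k-1}$, this matches the lower bound and proves \cref{eq:EATS-is-UCB}.
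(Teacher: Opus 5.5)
Your proof is correct and follows essentially the paper's route: a finite net of $\cA$, \cref{lemma:motivation-EATS} with a union bound over the net, and a Lipschitz bound on the residual controlled by $\max_{i}\norm{\xi_k^i}_2/\sqrt{2\ln N}$. The paper proves the two-sided uniform statement $\max_{\phi\in\cA}\abs{G_N(\phi)-U(\phi)}\Pto 0$ (up to the $O(\varepsilon)$ net error, in your notation) and concludes via $\abs{\max_{\phi}G_N(\phi)-\max_{\phi}U(\phi)}\le\max_{\phi}\abs{G_N(\phi)-U(\phi)}$, whereas you get the lower side more cheaply from the single maximizer $\phi^{\UCB}$. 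Your explicit Gaussian-concentration bound on $\max_{i}\norm{\xi_k^i}_2/\sqrt{2\ln N}$ is slightly more careful than the paper: the paper attributes the convergence of this quantity to \cref{lemma:motivation-EATS}, but that result only covers fixed one-dimensional projections $\phi^\top\Lambda^{-1/2}\xi$.
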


\looseness=-1
For completeness, we provide the proof in \cref{sec:proof-of-EATS-is-UCB}.
Since the objective of EATS matches that of UCB, EATS recovers UCB behavior in the limit of large $N$. As a direct consequence, when $N$ is sufficiently large, EATS achieves the same regret bound as UCB, $\cO\paren*{d\sqrt{K} \ln (K\delta^{-1})}$, which is $\sqrt{d}$ improvement over the ATS bound (\cref{thm:ATS}).

\looseness=-1
However, the above argument is purely asymptotic and does not provide a non-asymptotic regret bound for EATS. If approximating UCB requires $N$ to be exponentially large in $d$, the computational cost of EATS may become prohibitive. \textbf{For what finite size $N$ EATS can achieve a UCB-matching regret bound remains an interesting open question.} Although we do not currently have a theoretical answer, our preliminary experiments in \cref{sec:experiments} suggest that moderate ensemble sizes already yield strong empirical performance, which is encouraging for the practical utility of EATS.
\end{remark}

\begin{algorithm}[t!]
\caption{Ensemble Absolute Thompson Sampling (EATS)}
\label{alg:EATS}
\LinesNumbered
    \For{$k = 1, \dots, K$}{
        Update Gram matrix $\Lambda_k = \lambda I + \sum_{i=1}^{k-1} \phi_i \phi_i^\top$\;
        Sample $\xi_k^1, \dots, \xi_k^N \sim \cN(\bzero, I)$\;
        Compute $\phi_k \in \argmax_{\phi \in \A} \max_{i\in [N]} \phi^\top \hat{\theta}_k + \frac{\beta_k}{\sqrt{2\ln N}} \abs*{\phi^\top \Lambda_k^{-1/2}\xi_k^i}$. See \cref{remark:EATS}\;
        Pull $\phi_k$ and observe $r_k$\;
    }
\end{algorithm}

\section{Experiments}\label{sec:experiments}

\looseness=-1
This section presents the numerical results for the proposed algorithms. We first compare the empirical performance of ATS and TS, with the goal of showing that ATS achieves performance comparable to that of TS. We then study the empirical behavior of EATS to illustrate the potential of a new algorithmic design that may approach the statistical performance of UCB while substantially reducing computational complexity, provided that the ensemble size is chosen reasonably.

\subsection{Comparison of $\ATS$ vs $\TS$}\label{section:comparison_ts_ats}

\looseness=-1
We conduct experiments in three environments: the unit ball, the hypercube, and the two-lane shortest-path problem. Unless otherwise specified, all reported performance metrics are averaged over 100 independent random seeds.

Across these three environments, in addition to UCB (\cref{eq:UCB}), TS (\cref{eq:TS}), and ATS (\cref{eq:TS-abs}), we also compare the following variants:
\begin{align*}
&\text{(TS-No-Inflation)}\quad
\phi_k^{\text{TS-No-Inflation}} \in \argmax_{\phi \in \A} \ \phi^\top \hat{\theta}_k + \phi^\top \Lambda_k^{-1/2}\xi_k
\quad\text{where}\quad
\xi_k \sim \cN(\bzero,I)\;.\\
&\text{(ATS-No-$\sqrt{\pi/2}$)}\quad
\phi_k^{\text{ATS-No-$\sqrt{\pi/2}$}} \in \argmax_{\phi \in \A} \ \phi^\top \hat{\theta}_k + \beta_k \abs*{\phi^\top \Lambda_k^{-1/2}\xi_k}
\quad\text{where}\quad
\xi_k \sim \cN(\bzero,I)\;.\\
&\text{(ATS-No-Inflation)}\quad
\phi_k^{\text{ATS-No-Inflation}} \in \argmax_{\phi \in \A} \ \phi^\top \hat{\theta}_k + \abs*{\phi^\top \Lambda_k^{-1/2}\xi_k}
\quad\text{where}\quad
\xi_k \sim \cN(\bzero,I)\;.
\end{align*}
In addition, because UCB can be computationally expensive, we use Rarely-Switching-UCB (RS-UCB) \citep{abbasi2011improved} to reduce the computational cost.
\paragraph{Unit Ball}
The unit ball environment is a natural benchmark in this setting, as information-theoretic lower bounds for linear bandits have been constructed with the arm set taken to be the unit ball \citep{lattimore2020bandit}. In addition, recent work has shown that linear Thompson Sampling does not require forced exploration when the action set is smooth and convex \citep{abeille2025and}.

From \cref{fig:unit-ball-sweep}, the left panel shows that ATS performs worse than TS by only a constant factor, while exhibiting the same dependence on the dimension $d$. After removing the $\sqrt{\pi/2}$ factor, as in ATS-No-$\sqrt{\pi/2}$, the resulting performance is comparable to that of TS, and is even slightly better in our experiments.

In the right panel, we further evaluate the no-inflation variants of both TS and ATS, following the algorithmic design in \cite{abeille2025and}, where TS is shown to achieve a $\tilde{\mathcal O}(d\sqrt{T})$ regret guarantee on smooth and convex action sets such as the unit ball. Our experiments support this theoretical prediction and further suggest that ATS may also be able to match the performance of TS in the same setting considered by \cite{abeille2025and}.
\begin{figure}[t] \centering 
\begin{minipage}{0.48\linewidth} \centering \includegraphics[width=\linewidth]{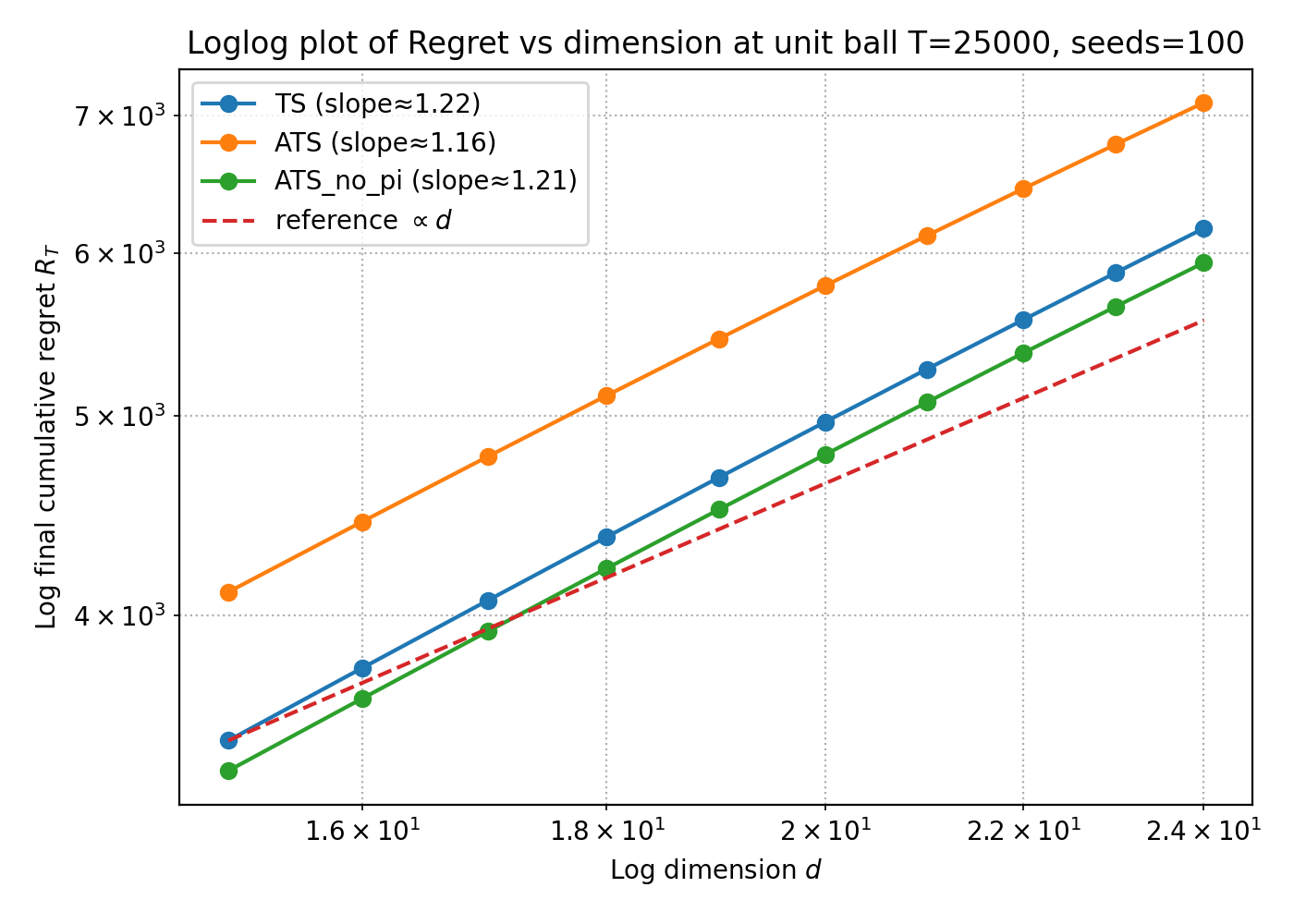} 
\end{minipage} 
\hfill 
\begin{minipage}{0.48\linewidth} \centering \includegraphics[width=\linewidth]{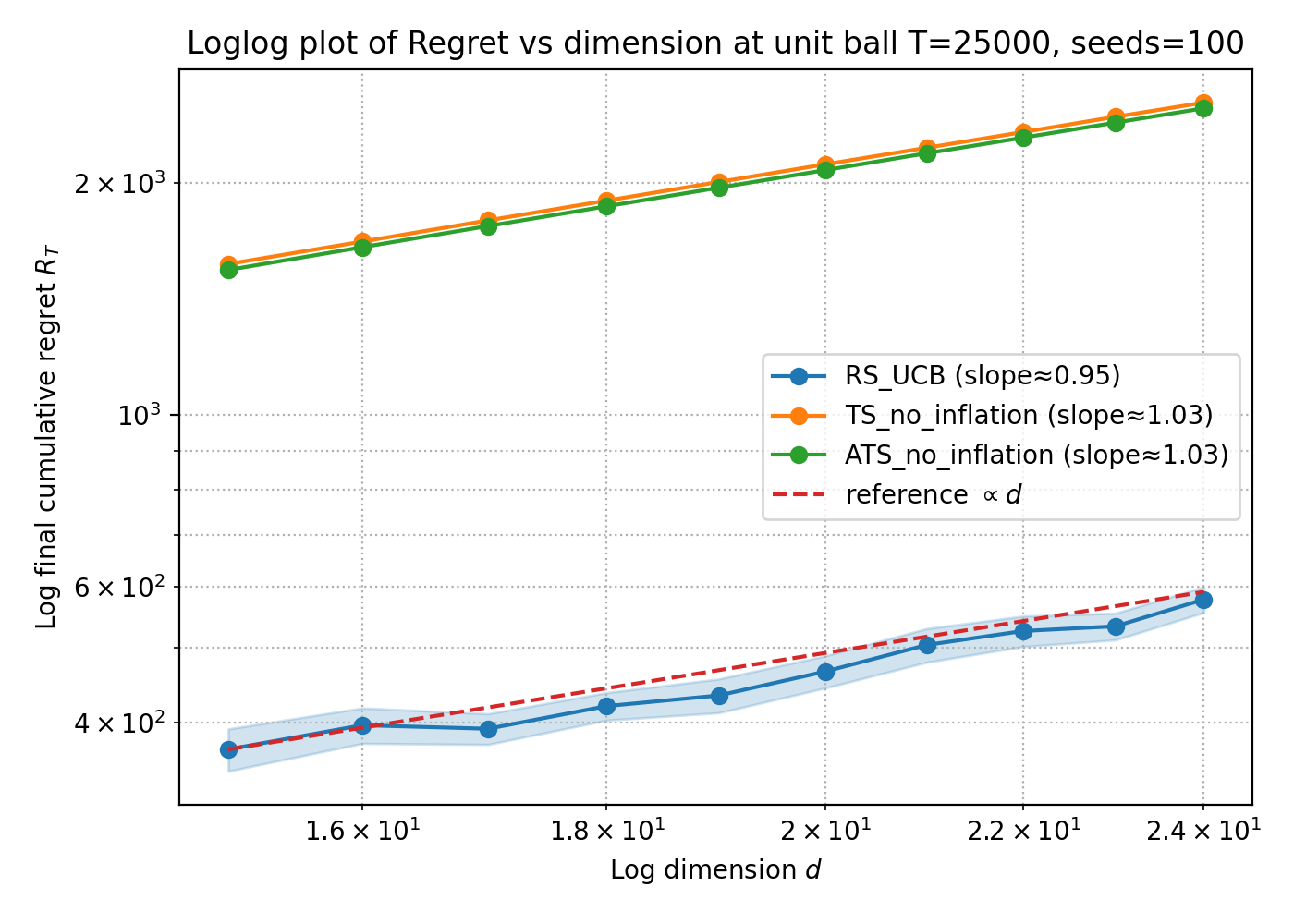} 
\end{minipage} 
\caption{\small Sweep results for the linear bandit problem with unit-sphere arm set, $\|\theta^\star\|=1$, and Gaussian noise $\eta_t \sim \mathcal N(0,0.1)$. The parameter $\theta^\star$ is sampled uniformly from the unit sphere. Results are averaged over 100 independent random seeds. Both panels present log--log plots of the final regret of different algorithms, where the feature dimension ranges over ${15,16,\dotsc,24}$ and the horizon is fixed at $T=250000$: \textit{(Left)} TS, ATS, and ATS-No-$\sqrt{\pi/2}$; \textit{(Right)} RS-UCB, TS-No-Inflation, and ATS-No-Inflation.}
\label{fig:unit-ball-sweep} 
\end{figure} 
\paragraph{Hypercube}

The hypercube environment is defined as follows. For arm dimension $d \in \mathbb{Z}^+$, the action set is
$\mathcal{A} = \{\pm 1\}^d / \sqrt{d}$,
so that every arm $\phi \in \mathcal{A}$ satisfies $\|\phi\|_2 \le 1$. The unknown parameter $\theta_*$ is sampled uniformly from the unit $\ell_2$-sphere. We consider this environment for two reasons. First, information-theoretic lower bounds for linear bandits are known to arise from hypercube-type constructions \citep{lattimore2020bandit}. Second, the action set has cardinality $2^d$, making exact UCB optimization computationally expensive in this setting.

The hypercube is also a challenging experimental regime because the number of actions grows exponentially with $d$, and the asymptotic behavior appears only slowly. From \cref{fig:hypercube-sweep}, we observe that ATS performs worse than TS and ATS-No-$\sqrt{\pi/2}$ by an approximately constant factor. As in the unit ball setting, removing the $\sqrt{\pi/2}$ factor improves the empirical performance of ATS, making it slightly better than TS in our experiments. Moreover, the right panel suggests that these results should still be interpreted as transient behavior, since none of the observed regrets appears to have reached its asymptotic scaling by the end of the horizon.
\begin{figure}[t] \centering 
\begin{minipage}{0.48\linewidth} \centering \includegraphics[width=\linewidth]{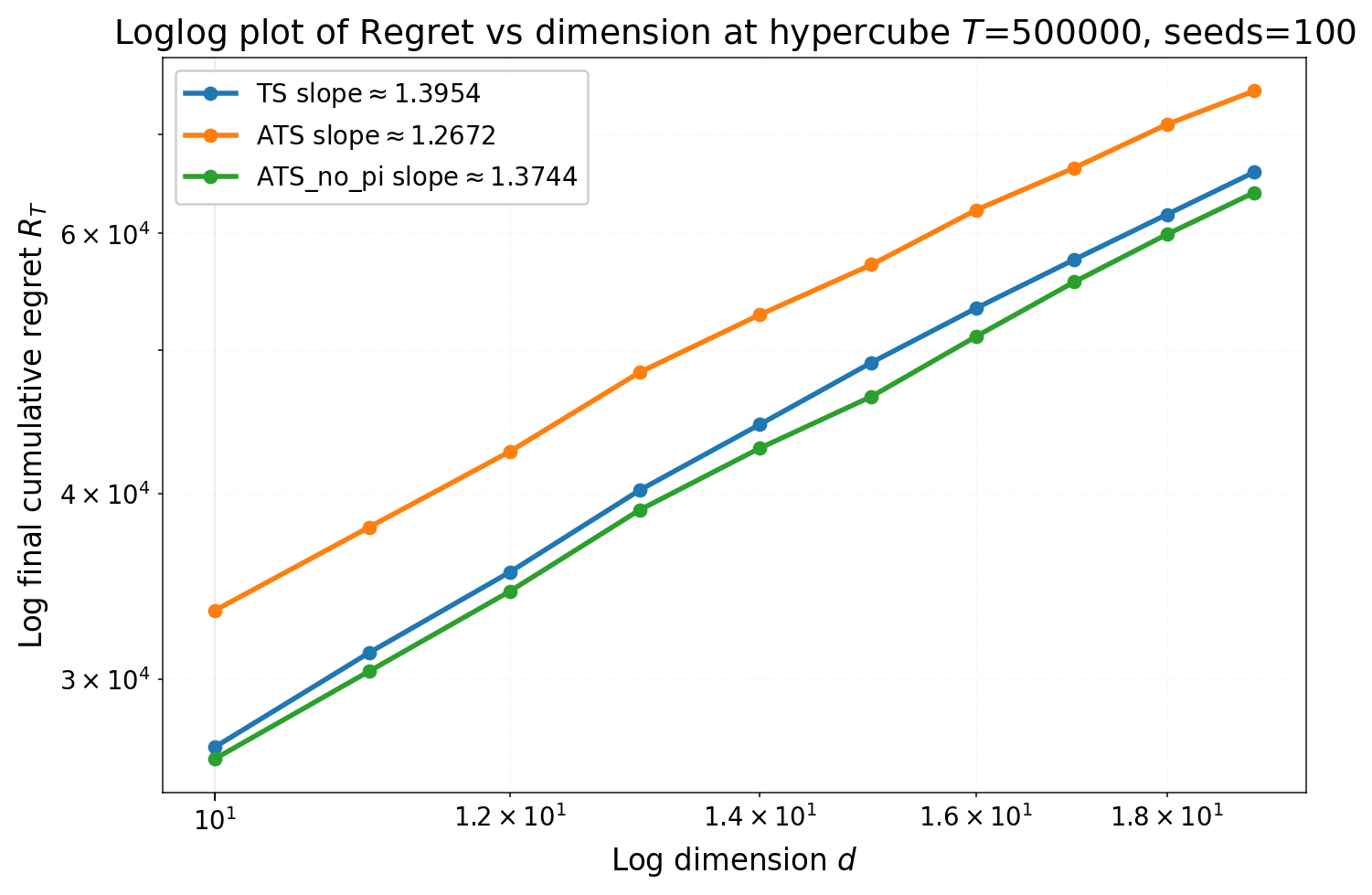} 
\end{minipage} 
\hfill 
\begin{minipage}{0.48\linewidth} \centering \includegraphics[width=\linewidth]{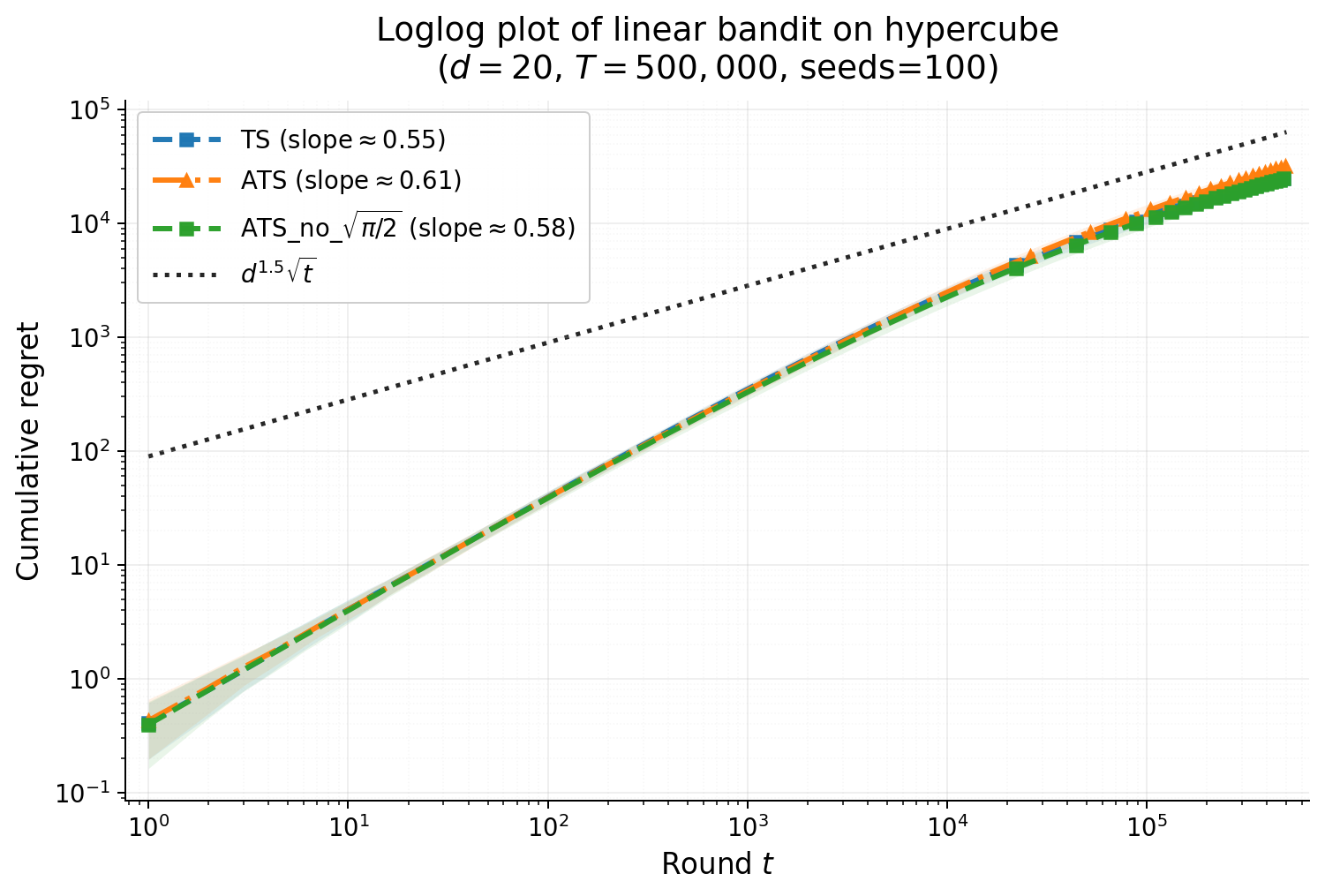} 
\end{minipage} 
\caption{\small Sweep results for the linear bandit problem with the arm set is the hypercube, $\|\theta^\star\|=1$, and noise $\eta_t \sim \mathcal N(0,0.1)$. The parameter $\theta^\star$ is drawn uniformly at random from the unit sphere. Results are averaged over 100 independent random seeds. Both panels show plots of the final regret for different algorithms, with feature dimension varying over ${10,11,\dotsc,19}$ and horizon fixed at $T=500000$: \textit{(Left)} Log--log plot of final regret of TS, ATS, and ATS-No-$\sqrt{\pi/2}$; \textit{(Right)} TS, ATS, and ATS-No-$\sqrt{\pi/2}$.} 
\label{fig:hypercube-sweep} 
\end{figure} 
There are at least two possible explanations for why the observed $d$-dependence of TS, ATS, and ATS-No-$\sqrt{\pi/2}$ does not fully match the theoretical prediction. The first is that the dimensions considered may still be too small. In \cref{fig:hypercube-sweep}, for $d \in \{20,\dotsc,30\}$ and $T=200{,}000$, all three algorithms still appear to be in the transient regime. Pushing to substantially larger $d$ is computationally infeasible in our setting, since even $d=30$ already corresponds to roughly $2^{30}$ arms. The second possibility is that the time horizon is still not large enough. Indeed, the right panel of \cref{fig:hypercube-sweep} indicates that all three methods remain in the transient regime even at the largest horizon we consider. However, horizons such as $T=500{,}000$ are already substantial relative to the dimensions studied and are also near the limit of what we can simulate under our computational budget.
\begin{figure}[t] \centering 
\begin{minipage}{0.48\linewidth} \centering \includegraphics[width=\linewidth]{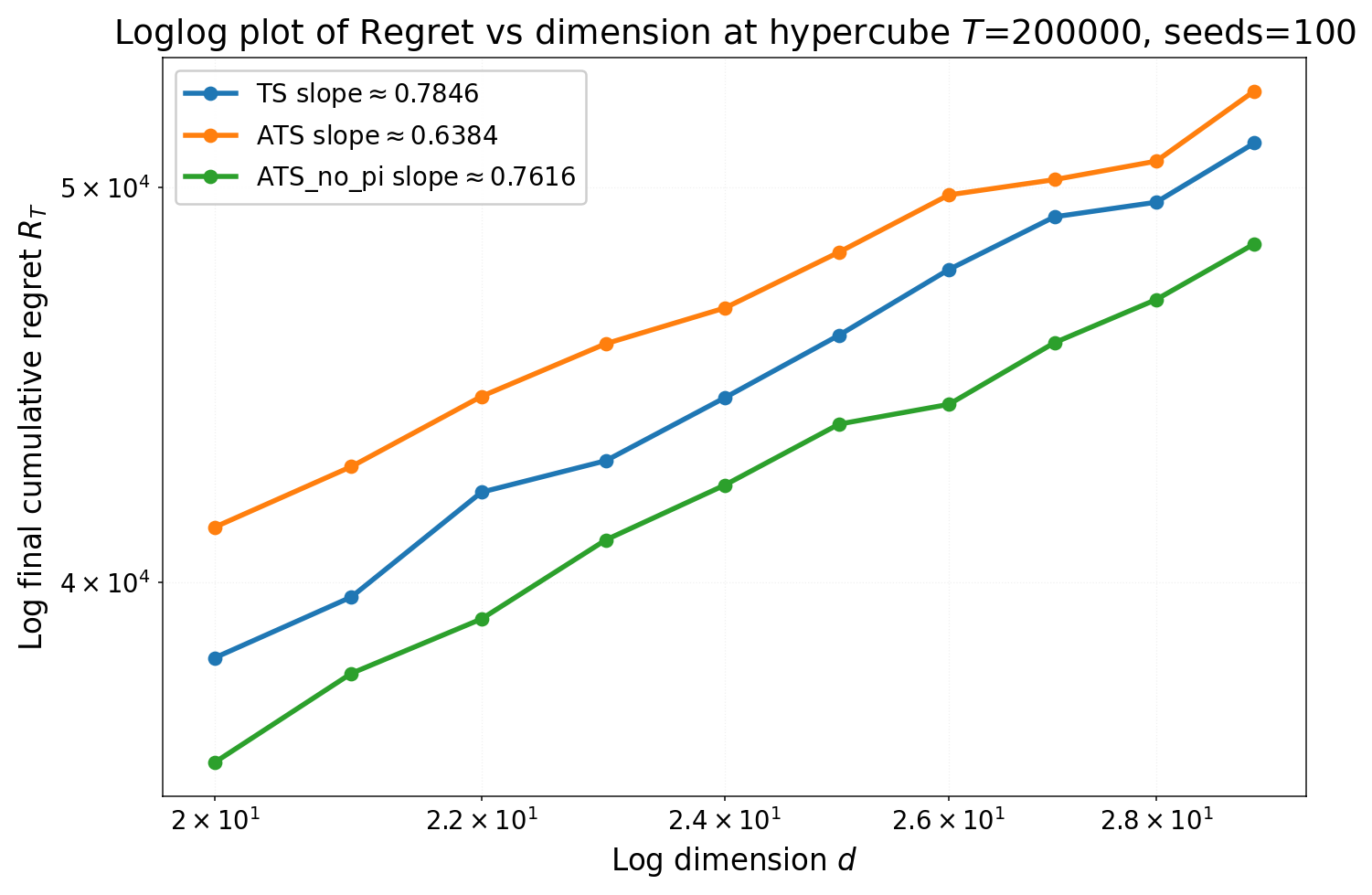} 
\end{minipage} 
\hfill 
\begin{minipage}{0.48\linewidth} \centering \includegraphics[width=\linewidth]{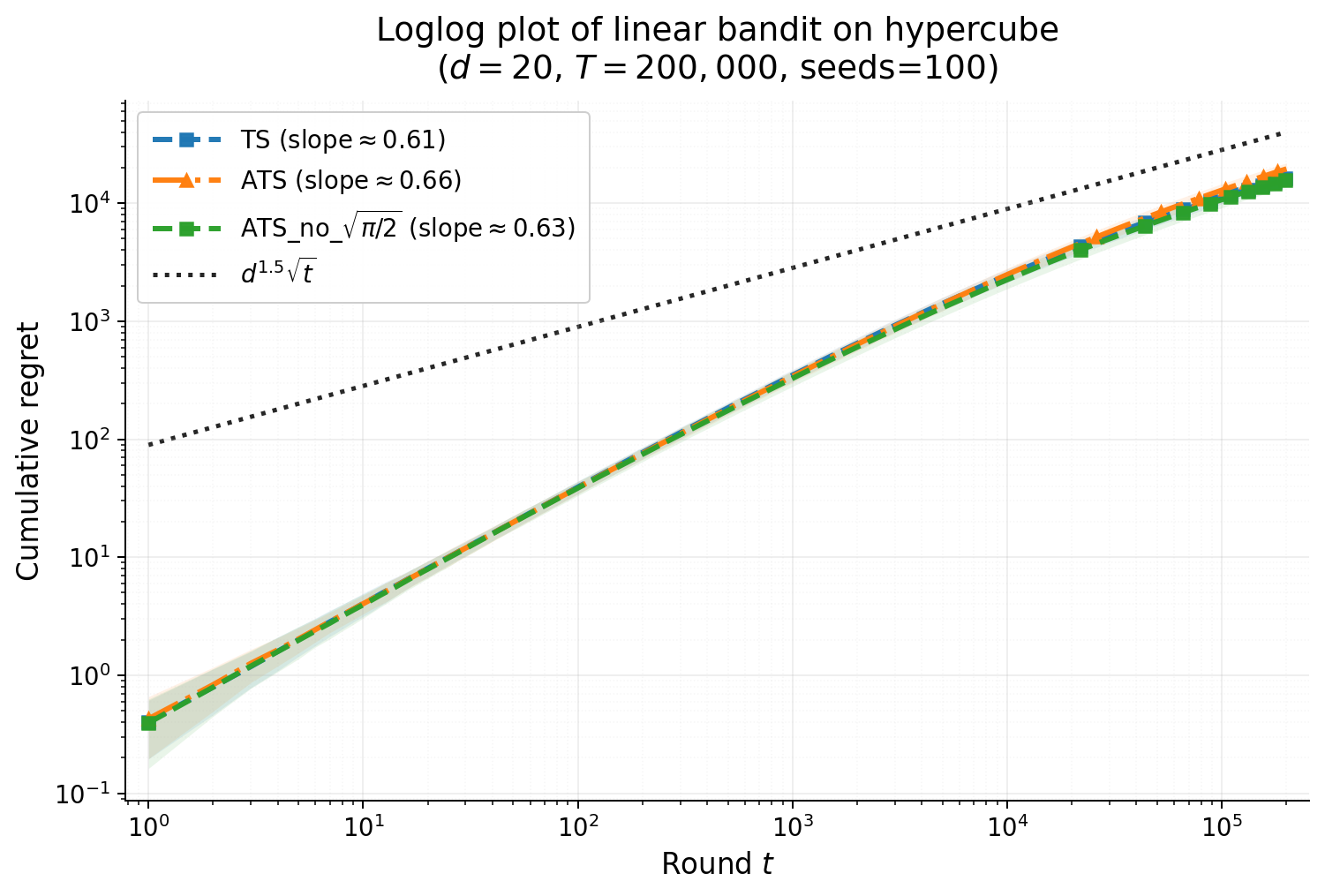} 
\end{minipage} 
\caption{\small Numerical results for the linear bandit problem with the arm set is the hypercube, $\|\theta^\star\|=1$, and noise $\eta_t \sim \mathcal N(0,0.1)$. The parameter $\theta^\star$ is drawn uniformly at random from the unit sphere. Results are averaged over 100 independent random seeds. Both panels show plots of the final regret for different algorithms, with feature dimension varying over ${20,21,\dotsc,29}$ and horizon fixed at $T=200000$: \textit{(Left)} Log--log plot of final regret of TS, ATS, and ATS-No-$\sqrt{\pi/2}$; \textit{(Right)} TS, ATS, and ATS-No-$\sqrt{\pi/2}$.} 
\label{fig:hypercube-sweep-higher} 
\end{figure}

Nevertheless, the current experiments consistently support the conclusion that ATS is worse than TS and ATS-No-$\sqrt{\pi/2}$ by an approximately constant factor, that removing the $\sqrt{\pi/2}$ term improves ATS and can even make it slightly outperform TS empirically, and that these methods appear to exhibit broadly similar dependence on $d$. It is therefore reasonable to conjecture that this qualitative picture persists in larger-$d$ and larger-$T$ regimes.

\paragraph{Two-Lane Shortest Path}

In this section, we study a shortest-path bandit instance on a two-lane layered graph. The learner's goal is to identify a minimum-cost path from a source node $s$ to a sink node $t$. The graph has $m\in\mathbb Z^+$ stages, each containing two nodes representing the upper and lower lanes. From each node at stage $i$, the learner may either stay in the same lane or switch to the other lane at stage $i+1$. Hence, each inter-stage transition admits four possible edges: $\{\text{upper, lower}\}\to\{\text{upper, lower}\}$.

We assign cost $1$ to every switching transition, cost $1$ to upper $\to$ upper, and cost $0$ to lower $\to$ lower. Consequently, the optimal path is obtained by staying in the lower lane at every stage, yielding total cost $0$. The resulting graph contains $2^m$ distinct $s$-$t$ paths and $d=2+4(m-1)+2=4m$ edges. See \cref{fig:shortest-path} as an example.

For any path $x$, let $\phi(x)\in\{0,1\}^d$ denote its edge-incidence feature vector. The observed cost is given by
$\tilde C(x)=\phi(x)^\top\theta_*+\varepsilon$, where $\varepsilon\sim\mathcal N(0,R^2)$. The edge-parameter vector $\theta_*$ assigns value $0$ to lower$\to$lower edges and value $1$ to all other edges, that is,
$(\theta_*)_i = 1-\mathbb I\!\left(e_i=\mathrm{lower}\to\mathrm{lower}\right)$.
From \cref{fig:shortest-path-sweep}, we see that ATS is worse than TS and ATS-No-$\sqrt{\pi/2}$ only by a constant factor, while all three appear to share the same scaling with the dimension $d$. Consistent with the unit ball experiments, removing the $\sqrt{\pi/2}$ factor yields a constant-factor improvement in empirical performance, and ATS-No-$\sqrt{\pi/2}$ becomes comparable to TS.

Moreover, TS-No-Inflation and ATS-No-Inflation continue to outperform the other algorithms by a significant margin. However, their empirical dependence on $d$ appears to be close to $d^{1.5}$, in contrast to the behavior observed in the unit ball setting.

\begin{figure}[t] \centering 
\begin{minipage}{0.48\linewidth} \centering \includegraphics[width=\linewidth]{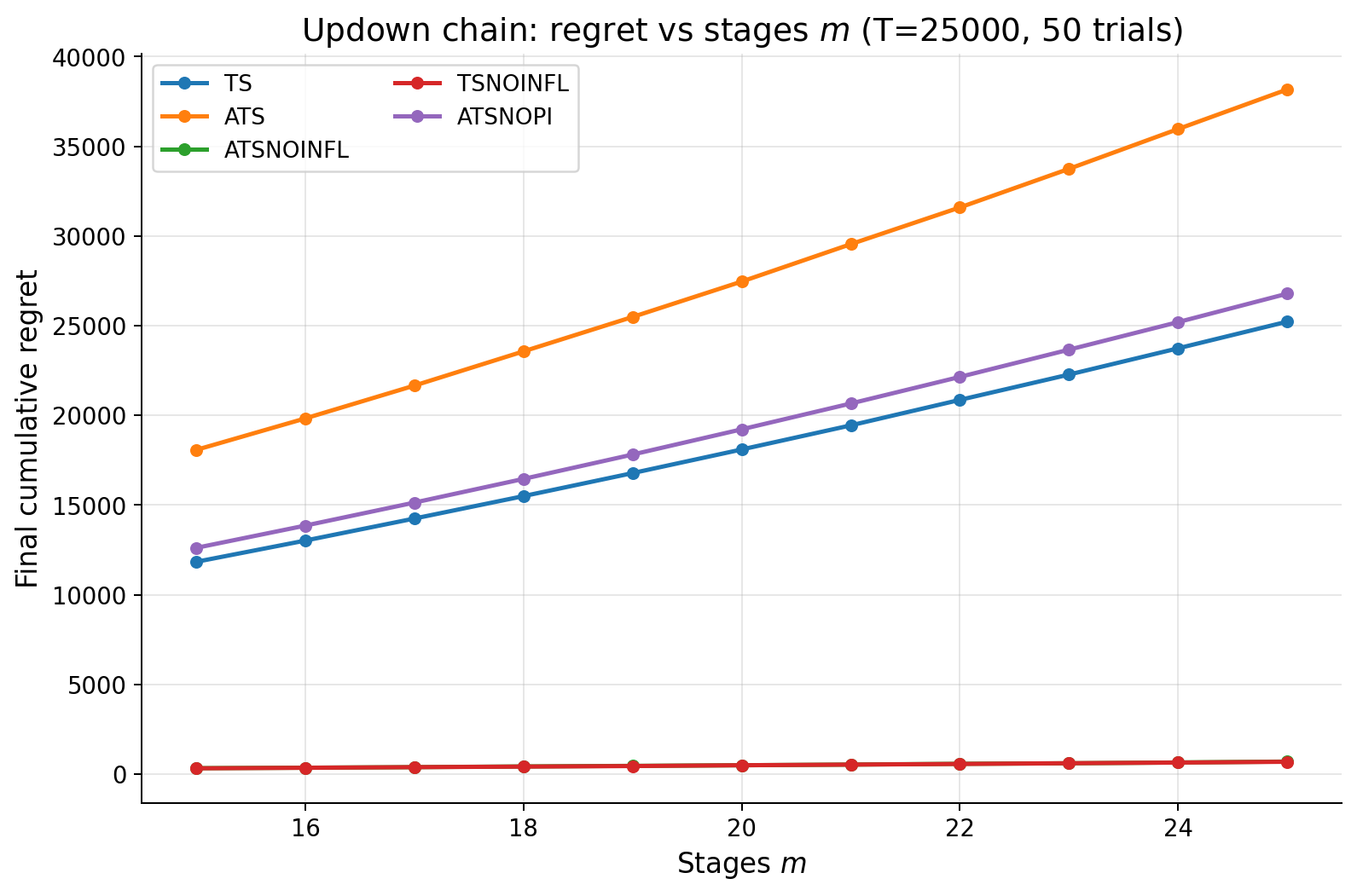} 
\end{minipage} 
\hfill 
\begin{minipage}{0.48\linewidth} \centering \includegraphics[width=\linewidth]{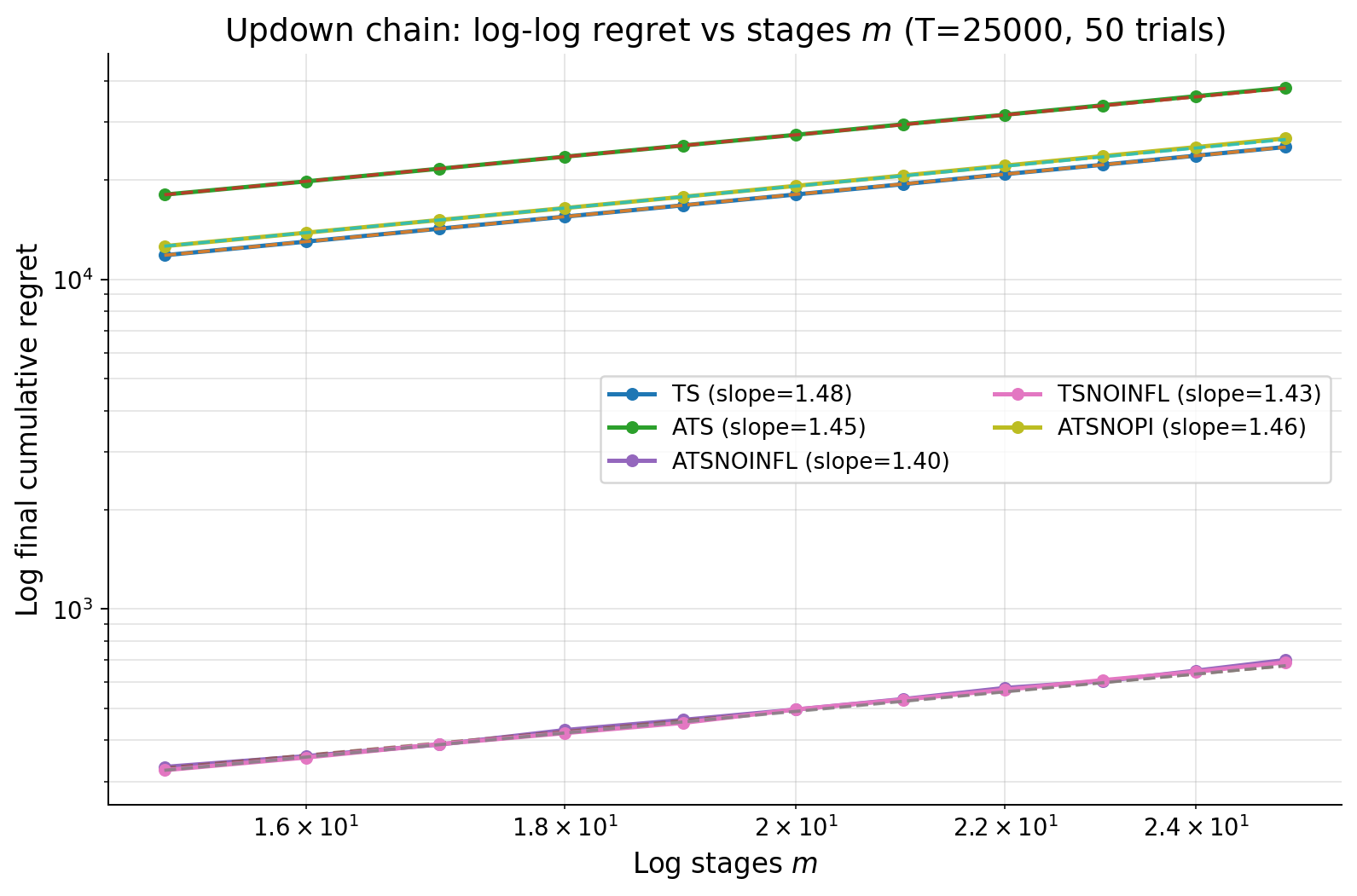} 
\end{minipage} 
\caption{\small Sweep results for the two-lane shortest-path problem with Gaussian noise $\varepsilon_t \sim \mathcal{N}(0,0.2)$, averaged over 50 independent random seeds. Both panels report the final regret of TS, TS-No-Inflation, ATS, ATS-No-$\sqrt{\pi/2}$, and ATS-No-Inflation as the feature dimension ranges over $\{15,16,\dotsc,25\}$, with horizon fixed at $T=250000$: \textit{(Left)} final regret versus $d$; \textit{(Right)} log-log plot of final regret versus $d$.}
\label{fig:shortest-path-sweep} 
\end{figure} 

\subsection{Empirical evidences for EATS}
In this section, we present empirical evidence that EATS has the potential to satisfy a $\tilde O(d\sqrt{K})$ regret guarantee when $N=\mathrm{poly}(d)$. To this end, we study the dependence of cumulative regret on the dimension $d$ through a dimension sweep. \color{red}In all EATS experiments reported in this section, we choose $N=d$. The settings of the environments in this section are also the same as the ones in \cref{section:comparison_ts_ats}.\color{black}

\paragraph{Unit Ball}
From \cref{fig:EATS-unitball}, we see that the regret of EATS exhibits an approximately linear dependence on $d$, while remaining worse than UCB by a substantial constant factor. At the same time, its empirical performance is comparable to that of TS-No-Inflation, for which \citet{abeille2025and} established a regret bound with linear dependence on $d$.
\begin{figure}[t]
    \centering
    \includegraphics[width=0.5\linewidth]{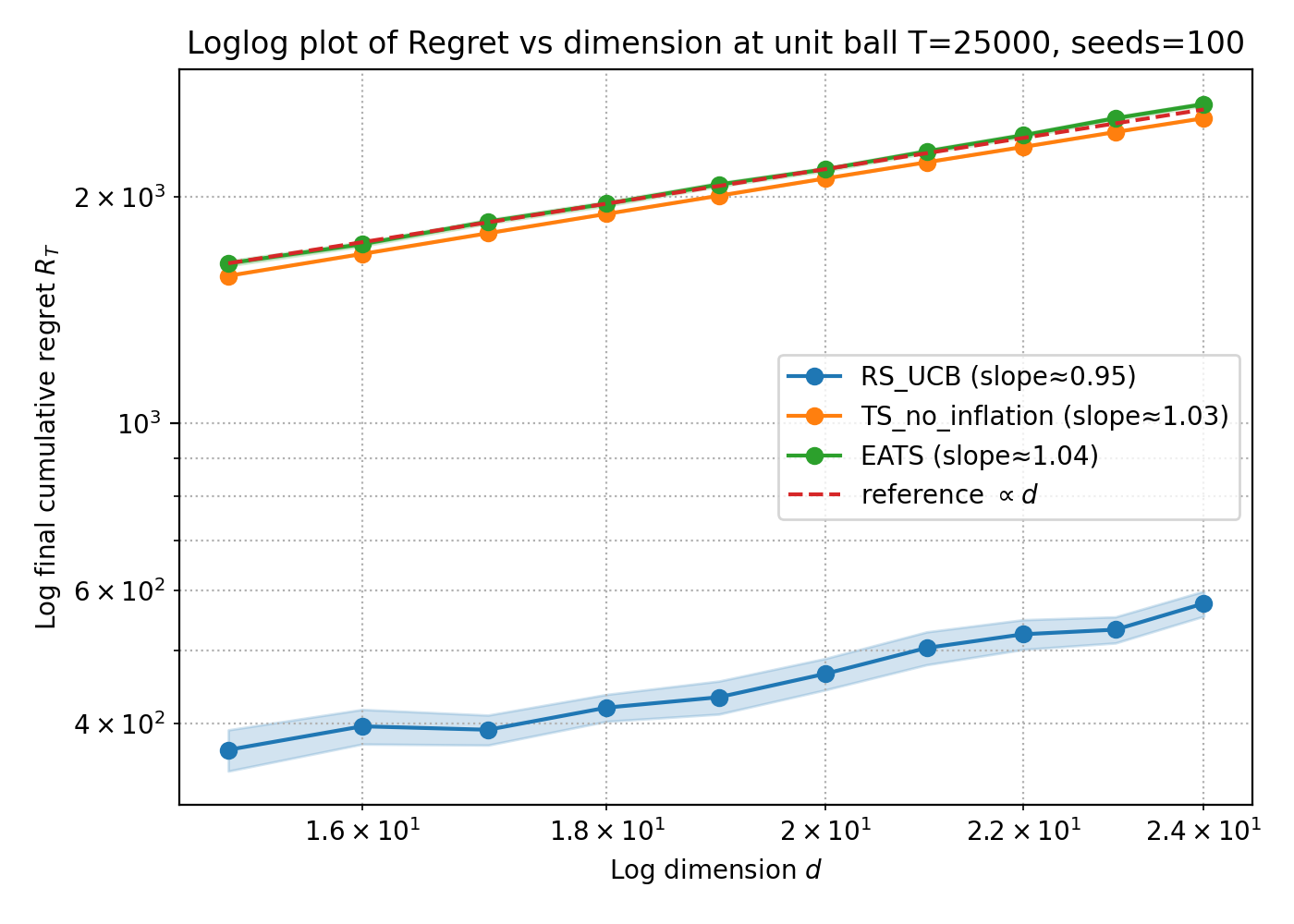}
    \caption{\small Sweep results for the linear bandit problem with unit-sphere arm set. Same setting as \cref{fig:unit-ball-sweep}. Results are averaged over 100 independent random seeds. Log--log plot of regret vs dimension for EATS, RS-UCB, and TS-No-Inflation.}
    \label{fig:EATS-unitball}
\end{figure}

\paragraph{Hypercube}
From the left panel of \cref{fig:EATS-hypercube}, we observe that the regret of EATS scales linearly with $d$. The right panel further suggests that EATS has already reached its asymptotic regime by $d=30$ and $T=200{,}000$, which in turn indicates that the experiments for $d \in \{20,\dotsc,30\}$ are likely also operating in the asymptotic regime. In contrast, \cref{fig:hypercube-sweep-higher} shows that TS and ATS still remain in their transient regimes over the same range, which explains why their observed dependence on $d$ is smaller.

Overall, the fact that EATS consistently displays linear scaling across both environments, including one in which the asymptotic regime appears to be visible, suggests that a $\tilde O(d\sqrt{K})$-type guarantee for EATS is plausible.
\begin{figure}[t] \centering 
\begin{minipage}{0.48\linewidth} \centering \includegraphics[width=\linewidth]{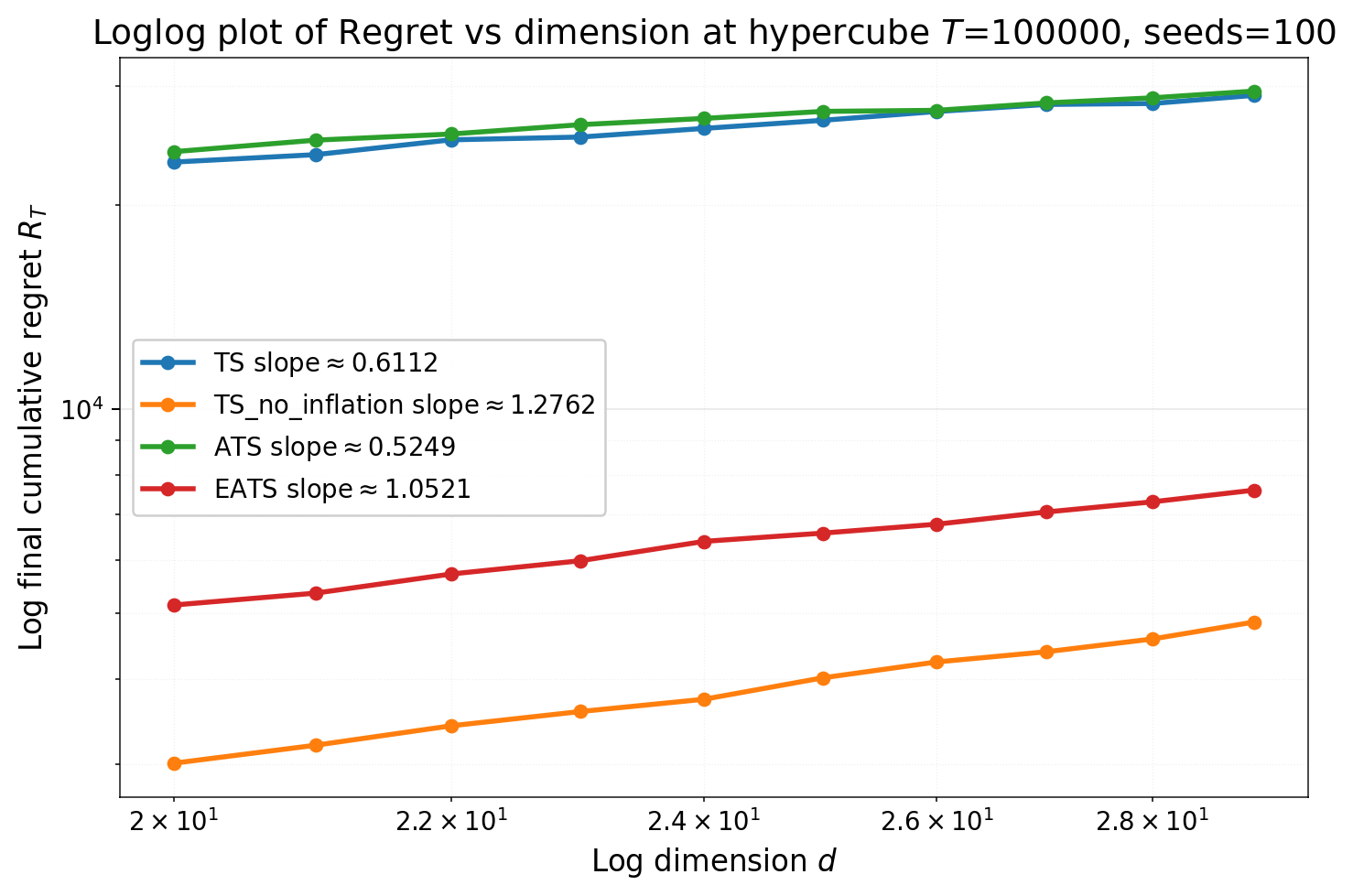} 
\end{minipage} 
\hfill 
\begin{minipage}{0.48\linewidth} \centering \includegraphics[width=\linewidth]{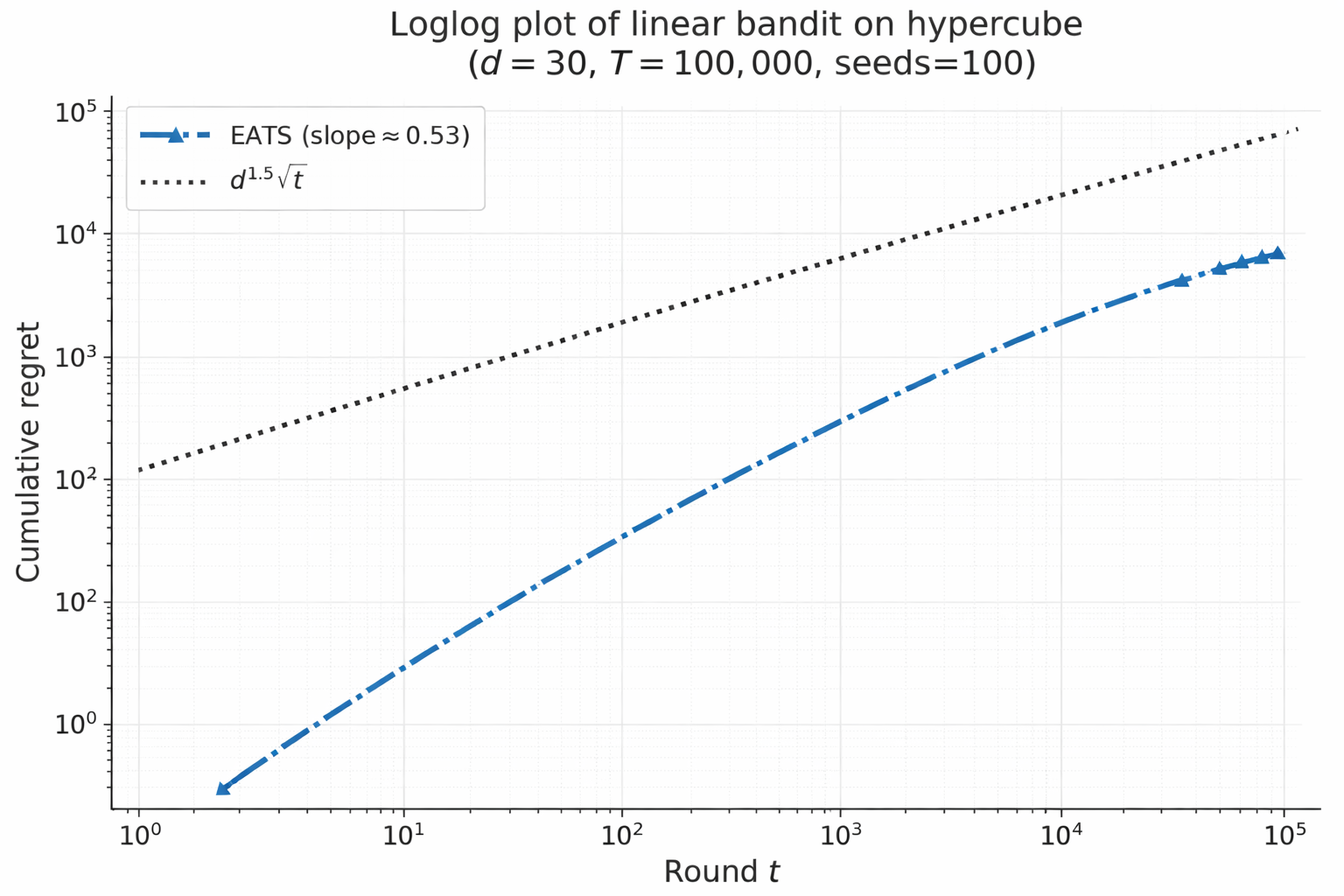} 
\end{minipage} 
\caption{\small Sweep results for the linear bandit problem with the arm set is the hypercube. Same setting as \cref{fig:hypercube-sweep}. \textit{(Left)} Log--log plot of final regret of EATS, TS, ATS, and TS-No-Inflation; \textit{(Right)} The behavior of EATS at $d=30$ and $T=100,000$.}
\label{fig:EATS-hypercube} 
\end{figure} 

\section{Conclusion}

\looseness=-1
We introduced Absolute Thompson Sampling (ATS), which replaces the signed exploration noise in TS with its absolute value. This ensures optimism in expectation, enabling a UCB-style regret analysis without the technically involved anti-concentration arguments typically required for TS. ATS achieves $\widetilde{O}(d^{3/2}\sqrt{K})$ regret while retaining the computational efficiency of randomized exploration.

\looseness=-1
We also proposed Ensemble Absolute Thompson Sampling (EATS), which approximates the UCB objective by taking the maximum over multiple absolute perturbations with normalization by the ensemble size. As the ensemble size grows, EATS recovers UCB behavior in the limit. Experiments suggest that moderate ensemble sizes can yield strong performance. Establishing non-asymptotic guarantees for EATS with finite ensemble sizes remains an interesting direction for future work.

\looseness=-1
Overall, our results suggest a new perspective on randomized exploration: by carefully shaping the exploration term, one can design algorithms that retain the computational efficiency of TS while exhibiting analysis and behavior closely aligned with UCB.




\bibliography{main}
\bibliographystyle{rlj}

\appendix
\beginSupplementaryMaterials

\section{Useful Lemmas} \label{sec:exp-to-realization-proof}

\begin{lemma}[Azuma-Hoeffding]\label{lemma:hoeffding}
    Let $(X_k, \cF_k)_{k=1}^K$ be a martingale difference sequence, i.e., $X_k$ is $\cF_k$-measurable and $\E\brack{X_k \given \cF_{k-1}} = 0$.
    Assume $X_k \in [l_k, u_k]$ almost surely. Then, for any $\delta \in (0, 1)$,
	\begin{align*}
	\P \paren*{
      \sum_{k=1}^K X_k
      \geq
      \sqrt{
        \sum_{k=1}^K \frac{(u_k - l_k)^2}{2} \log \frac{1}{\delta}
      }
    }
    \leq \delta\;.
	\end{align*}
\end{lemma}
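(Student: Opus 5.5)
We prove Lemma~\ref{lemma:hoeffding} by the standard exponential-moment (Chernoff) argument, using Hoeffding's lemma conditionally at each step. Fix $s > 0$ and let $S_K = \sum_{k=1}^K X_k$. Markov's inequality applied to $e^{sS_K}$ gives
\begin{align*}
\P\paren*{S_K \geq t} \leq e^{-st}\,\E\brack*{e^{sS_K}}\;.
\end{align*}
So it suffices to bound the moment generating function of $S_K$.

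For this we peel off one term at a time with the tower property. Since $S_{K-1}$ is $\cF_{K-1}$-measurable,
\begin{align*}
\E\brack*{e^{sS_K}} = \E\brack*{e^{sS_{K-1}}\,\E\brack*{e^{sX_K}\given \cF_{K-1}}}\;.
\end{align*}
Conditionally on $\cF_{K-1}$, the variable $X_K$ has mean zero and lies in $[l_K,u_K]$. Hoeffding's lemma therefore gives
\begin{align*}
\E\brack*{e^{sX_K}\given\cF_{K-1}} \leq \exp\paren*{\tfrac{s^2(u_K-l_K)^2}{8}}\;.
\end{align*}
Iterating over $k = K, K-1, \dots, 1$ yields
\begin{align*}
\E\brack*{e^{sS_K}} \leq \exp\paren*{\tfrac{s^2}{8}\sum_{k=1}^K (u_k-l_k)^2}\;.
\end{align*}
This step is where care is needed. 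Hoeffding's lemma requires the range endpoints to be known given the past, so we read $l_k,u_k$ as deterministic constants, or at least as $\cF_{k-1}$-measurable. With that reading the conditional bound is legitimate, and the product telescopes because each factor is bounded by a constant. If $l_k,u_k$ were only $\cF_{k-1}$-measurable, the final sum would be random and the stated bound would need that sum to be a.s.\ bounded by a constant. In our application the range $2(B+\beta_K/\sqrt{\lambda})$ is deterministic, so this is fine.

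It remains to optimize over $s$. Write $V = \sum_{k=1}^K (u_k-l_k)^2$. Combining the two displays gives
\begin{align*}
\P\paren*{S_K\geq t}\leq \exp\paren*{-st + s^2V/8}\;.
\end{align*}
The choice $s = 4t/V$ makes the right-hand side $\exp(-2t^2/V)$. Setting $t = \sqrt{\tfrac{V}{2}\log\tfrac1\delta}$ makes it equal to $\delta$, which is exactly the claimed threshold $\sqrt{\sum_k \frac{(u_k-l_k)^2}{2}\log\frac1\delta}$.

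Hoeffding's lemma itself, $\E e^{sX}\le e^{s^2(b-a)^2/8}$ for a mean-zero $X\in[a,b]$, can be cited as standard. If a self-contained proof is wanted, use convexity to bound $e^{sx}$ on $[a,b]$ by the chord through the endpoints, take expectations, and bound the log of the resulting two-point function by a second-order Taylor expansion whose second derivative is at most $1/4$.
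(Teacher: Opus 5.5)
The paper does not prove this lemma at all: it is quoted as the standard Azuma--Hoeffding inequality and only invoked inside Lemma~\ref{lemma:hoeffding-and-good}. Your argument (Chernoff bound, conditional Hoeffding lemma peeled off via the tower property, then $s=4t/V$) is exactly the textbook proof and is correct, with two minor precisions.

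First, the peeling only needs the lengths $u_k-l_k$ to be deterministic; the endpoints themselves may be $\cF_{k-1}$-measurable. That is precisely the situation in Lemma~\ref{lemma:hoeffding-and-good}, where $\tilde X_k$ is centered by the random quantity $\mathbb{E}[\tilde Y_k\mid\cF_{k-1}]$.

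Second, your choice $s=4t/V$ silently assumes $V>0$, and the statement itself needs this too. For $V=0$ one has $S_K=0$ a.s., so the claimed bound $\P(S_K\geq 0)\leq\delta$ fails.
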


\begin{lemma}[Azuma-Hoeffding with good events]\label{lemma:hoeffding-and-good}
    Let $(Y_k)_{k=1}^\infty$ be $\cF_k$-measurable and integrable random variables.
    Let $\sE_k$ be $\cF_{k-1}$-measurable events and let $\sE$ be an event such that $\sE \subseteq \bigcap^\infty_{k=1} \sE_k$.
    Assume that on $\sE_k$, we have $Y_k \in [l_k, u_k]$ almost surely. Then, for any $\delta \in (0, 1)$,
	\begin{align*}
	\P \paren*{
    \sE\cap
    \brace*{
        \exists K \geq 1:
      \sum_{k=1}^K \paren*{Y_k - \E[Y_k \given \cF_{k-1}]}
      \geq
      \sqrt{
        \sum_{k=1}^K \frac{(u_k - l_k)^2}{2} \log \frac{\pi^2 K^2}{6\delta}
      }
    }
    }
    \leq \delta\;.
	\end{align*}
\end{lemma}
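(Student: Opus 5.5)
The plan is to reduce \cref{lemma:hoeffding-and-good} to the plain Azuma--Hoeffding inequality (\cref{lemma:hoeffding}) by truncating the increments outside the good events, and then to make the bound hold for all $K$ simultaneously via a union bound with confidence levels $\delta_K = 6\delta/(\pi^2K^2)$. Since $\sum_{K\ge1} 6\delta/(\pi^2 K^2) = \delta$, the union bound costs exactly $\delta$.

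\textbf{Step 1: truncation.} Define the martingale difference sequence
\begin{align*}
X_k \coloneqq \mathbbm{1}_{\sE_k}\paren*{Y_k - \E\brack*{Y_k \given \cF_{k-1}}}.
\end{align*}
Because $\sE_k$ is $\cF_{k-1}$-measurable, we get $\E\brack*{X_k \given \cF_{k-1}} = \mathbbm{1}_{\sE_k}\cdot 0 = 0$, so $(X_k)$ is a martingale difference sequence. On $\sE_k$ we have $Y_k\in[l_k,u_k]$ a.s., and hence $\E\brack*{Y_k\given\cF_{k-1}} \in [l_k,u_k]$ on $\sE_k$ as well. This step requires $l_k,u_k$ to be $\cF_{k-1}$-measurable, or deterministic, which is how I would read the statement. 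It follows that $X_k$ lies in an interval of length $u_k-l_k$. Concretely, $X_k \in [\mathbbm{1}_{\sE_k}(l_k - c_k), \mathbbm{1}_{\sE_k}(u_k - c_k)]$ with $c_k = \E\brack*{Y_k\given\cF_{k-1}}$, and this range has width at most $u_k-l_k$. The range is predictable rather than fixed. The standard Azuma--Hoeffding proof goes through the conditional Hoeffding lemma $\E[e^{sX_k}\given\cF_{k-1}]\le e^{s^2(u_k-l_k)^2/8}$, which only needs the interval to be $\cF_{k-1}$-measurable with a deterministic width. So I would invoke \cref{lemma:hoeffding} in that form, or note that its proof applies verbatim.

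\textbf{Step 2: fixed $K$.} Apply \cref{lemma:hoeffding} at level $\delta_K$. This gives
\begin{align*}
\P\paren*{\sum_{k=1}^K X_k \ge \sqrt{\sum_{k=1}^K \tfrac{(u_k-l_k)^2}{2}\log\tfrac{\pi^2K^2}{6\delta}}}\le \tfrac{6\delta}{\pi^2K^2}.
\end{align*}
On $\sE\subseteq\bigcap_k\sE_k$ we have $X_k = Y_k-\E[Y_k\given\cF_{k-1}]$ for every $k$. Therefore the event in the lemma at a fixed $K$, intersected with $\sE$, is contained in the event above.

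\textbf{Step 3: union over $K$.} Take a union bound over $K\ge1$ and sum $6\delta/(\pi^2K^2)$ to obtain $\delta$. This proves the claim.

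The main obstacle is the measurability subtlety in Step 1. \cref{lemma:hoeffding} is stated with fixed bounds $l_k,u_k$, whereas after truncation the range of $X_k$ is random and only predictable. To handle this, I would restate the conditional MGF bound and rerun the Chernoff argument. If $l_k, u_k$ are deterministic, the width is deterministic and the argument is clean.
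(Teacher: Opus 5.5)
Your proof is correct and is essentially the paper's argument. Since $\sE_k$ is $\cF_{k-1}$-measurable, your truncated increment equals the paper's $\tilde{Y}_k - \E[\tilde{Y}_k \given \cF_{k-1}]$ (with $\tilde{Y}_k$ equal to $Y_k$ on $\sE_k$ and $0$ otherwise), followed by the same Azuma--Hoeffding application at level $\delta_K = 6\delta/(\pi^2K^2)$ and union bound over $K$. Your remark that the truncated increment lies only in a predictable, not fixed, interval of width $u_k - l_k$ is also valid. The Azuma--Hoeffding lemma must therefore be invoked in its conditional-MGF form, a step the paper passes over silently.
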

\begin{proof}
Define $\tilde{Y}_k\coloneqq \done[\sE_k] Y_k$ which returns $Y_k$ if $\sE_k$ holds and $0$ otherwise. Since $\sE_k \in \cF_{k-1}$, the process $\tilde{X}_k \coloneqq \tilde{Y}_k - \E[\tilde{Y}_k \mid \cF_{k-1}]$ is $\cF_{k}$-measurable and a martingale difference sequence. Moreover, $\tilde{X}_k$ lies in an interval of length at most $u_k - l_k$ almost surely. Define $\delta_K \coloneqq 6\delta / \pi^2 K^2$. By \cref{lemma:hoeffding}, and taking a union bound over $K \geq 1$,
$$
\P \paren*{
\exists K \geq 1: 
    \sum_{k=1}^K \tilde{X}_k
    \geq
    \sqrt{
    \sum_{k=1}^K \frac{(u_k - l_k)^2}{2} \log \frac{1}{\delta_K}
    }
} \leq \sum_{K=1}^\infty \delta_K = \delta\;.
$$
On the event $\sE$, we have $\tilde{Y}_k = Y_k$ for all $k$. Therefore,
$$
	\P \paren*{
    \sE\cap
    \brace*{
        \exists K \geq 1:
      \sum_{k=1}^K \paren*{Y_k - \E[Y_k \given \cF_{k-1}]}
      \geq
      \sqrt{
        \sum_{k=1}^K \frac{(u_k - l_k)^2}{2} \log \frac{1}{\delta_K}
      }
    }
    }
    \leq \delta\;.
$$
\end{proof}

\begin{lemma}[\textbf{Lemma D.4} in \citet{rosenberg2020near}]\label{lemma:exp-to-concrete}
Let $\left(X_{k}\right)_{k=1}^{\infty}$ be random variables with expectation adapted to the filtration $\left(\mathcal{F}_{k}\right)_{k=0}^{\infty}$. 
Suppose that $0 \leq X_{k} \leq C$ almost surely. 
Then, with probability at least $1-\delta$, the following holds for all $K \geq 1$:
$$
\sum_{k=1}^K \mathbb{E}\left[X_{k} \mid \mathcal{F}_{k-1}\right] \leq 4 C \ln \frac{2 K}{\delta} + 2 \sum_{k=1}^K X_k\;.
$$
\end{lemma}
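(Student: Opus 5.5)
The plan is to prove the bound directly with an exponential supermartingale and Ville's maximal inequality. This gives a statement uniform in $K$ with constants at least as good as the ones claimed.

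First, reduce to $C=1$ by applying the argument to $X_k/C\in[0,1]$ and multiplying the resulting inequality through by $C$. Write $\mu_k\coloneqq\E[X_k\given\cF_{k-1}]\in[0,1]$.

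The key pointwise fact comes from convexity of $x\mapsto e^{-\lambda x}$ on $[0,1]$. For $\lambda>0$ and $x\in[0,1]$ it gives $e^{-\lambda x}\le 1-(1-e^{-\lambda})x$. Taking conditional expectations and using $1-u\le e^{-u}$ yields
\begin{align*}
\E\brack*{e^{-\lambda X_k}\given\cF_{k-1}}\le 1-(1-e^{-\lambda})\mu_k\le \exp\paren*{-(1-e^{-\lambda})\mu_k}\;.
\end{align*}
Hence the process
\begin{align*}
M_K\coloneqq\exp\paren*{\sum_{k=1}^K\paren*{(1-e^{-\lambda})\mu_k-\lambda X_k}},\qquad M_0=1,
\end{align*}
is a nonnegative supermartingale with respect to $(\cF_k)$. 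Here I use that $\mu_k$ is $\cF_{k-1}$-measurable and $X_k$ is $\cF_k$-measurable.

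Next, apply Ville's inequality, $\P(\sup_{K\ge 0}M_K\ge 1/\delta)\le\delta$. It shows that with probability at least $1-\delta$, simultaneously for all $K\ge1$,
\begin{align*}
\sum_{k=1}^K\mu_k\le \frac{\lambda}{1-e^{-\lambda}}\sum_{k=1}^K X_k+\frac{1}{1-e^{-\lambda}}\ln\frac1\delta\;.
\end{align*}
Choosing $\lambda=1$ gives constants $\frac{1}{1-e^{-1}}\approx 1.582\le 2$ on both terms. Since $\ln\frac1\delta\le\ln\frac{2K}{\delta}$, the result is at most $2\sum_k X_k+4\ln\frac{2K}{\delta}$. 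Rescaling by $C$ then gives exactly the claimed bound.

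The main point requiring care is the uniformity over all $K\ge1$. A fixed-$K$ Freedman or Bernstein bound would need a union bound, which is presumably where the $\ln(2K/\delta)$ in the original statement comes from. Using Ville's maximal inequality for the supermartingale $M_K$ avoids this entirely.

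If one preferred to avoid Ville, the fallback is to apply Markov's inequality to $M_K$ at each fixed $K$ with confidence $\delta/(2K^2)$ and take a union bound. This costs $\ln(2K^2/\delta)\le 2\ln(2K/\delta)$, which is still absorbed by the factor $4$ since $2/(1-e^{-1})<4$.
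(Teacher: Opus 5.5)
Your proof is correct, but it takes a different route from the one the paper relies on. The paper does not prove this lemma; it imports it from \citet{rosenberg2020near}, where it is derived from an anytime Freedman-type inequality, and that derivation has three steps. First, Freedman is applied to the martingale differences $\mathbb{E}[X_k\mid\mathcal{F}_{k-1}]-X_k$, whose conditional variance is at most $\mathbb{E}[X_k^2\mid\mathcal{F}_{k-1}]\le C\,\mathbb{E}[X_k\mid\mathcal{F}_{k-1}]$. Second, its free parameter is tuned at order $1/C$ so that half of $\sum_k\mathbb{E}[X_k\mid\mathcal{F}_{k-1}]$ is absorbed into the left-hand side; this produces the factors $2$ and $4C$. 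Third, uniformity in $K$ comes from a union bound over $K$, exactly as you guessed, which is the source of $\ln(2K/\delta)$.

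You skip the variance step entirely. The chord bound $e^{-\lambda x}\le 1-(1-e^{-\lambda})x$ on $[0,1]$ controls the lower-tail moment generating function by the conditional mean alone, so the self-bounding property of nonnegative variables is built into the supermartingale. Ville's maximal inequality then gives uniformity in $K$ at no cost.

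Your route gives a short, self-contained argument with a stronger conclusion: coefficient $e/(e-1)$ on both terms, and $\ln(1/\delta)$ with no dependence on $K$. That bound implies the stated one and could be used for $\sE_2$ directly. The cited route buys modularity instead, since the same Freedman inequality also handles signed increments that are not self-bounded.

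Two points you use implicitly are fine. The final comparison $\frac{e}{e-1}\ln\frac{1}{\delta}\le 4\ln\frac{2K}{\delta}$ needs $\delta<1$, which is the only nontrivial case. Your requirement that $X_k$ be $\mathcal{F}_k$-measurable is the sensible reading of ``expectation adapted.'' It holds in the paper's application $X_k=\|\phi_k\|_{\Lambda_k^{-1}}^2$, because $\phi_k$ depends only on $\xi_k$ and the past.
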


\section{Proofs for \cref{sec:ensemble}}

\subsection{Proof of \cref{lemma:motivation-EATS}}\label{sec:proof-of-EATS-motivation}

\begin{lemma}[Restatement of \cref{lemma:motivation-EATS}]
Let $\xi_1, \dots, \xi_N \sim \mathcal{N}(\bzero,I)$ be i.i.d. Gaussian vectors. For fixed symmetric positive definite matrix $\Lambda \in \R^{d \times d}$ and vector $\phi \in \mathbb{R}^d$, we have
\begin{align*}
    \max_{i \in [N]} \frac{1}{\sqrt{2\ln N}} \abs*{\phi^\top \Lambda^{-1/2} \xi_i} \Pto \norm*{\phi}_{\Lambda^{-1}}\;.
\end{align*}
\end{lemma}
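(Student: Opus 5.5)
First reduce to a scalar problem. Set $\sigma \coloneqq \norm{\phi}_{\Lambda^{-1}}$. If $\phi = \bzero$, both sides are zero and there is nothing to prove, so assume $\sigma>0$. Since $\Lambda^{-1/2}$ is symmetric, the random variables $Z_i \coloneqq \phi^\top\Lambda^{-1/2}\xi_i/\sigma$ are i.i.d. $\cN(0,1)$: each is a linear functional of a standard Gaussian vector with variance $\phi^\top\Lambda^{-1}\phi/\sigma^2 = 1$. The claim therefore becomes $M_N \coloneqq \max_{i\in[N]}\abs{Z_i}/\sqrt{2\ln N}\Pto 1$, after multiplying by the constant $\sigma$. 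We show $\P(M_N > 1+\varepsilon)\to 0$ and $\P(M_N<1-\varepsilon)\to 0$ for every fixed $\varepsilon\in(0,1)$.

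\textbf{Upper tail.} Apply a union bound together with the Gaussian tail bound $\P(\abs{Z}>t)\le 2e^{-t^2/2}$. With $t=(1+\varepsilon)\sqrt{2\ln N}$ this gives
\[
\P(M_N>1+\varepsilon)\le 2N e^{-(1+\varepsilon)^2\ln N} = 2N^{1-(1+\varepsilon)^2}\to 0 .
\]

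\textbf{Lower tail.} By independence, with $t=(1-\varepsilon)\sqrt{2\ln N}$,
\[
\P(M_N<1-\varepsilon)=\bigl(1-p_N\bigr)^N\le \exp(-Np_N), \qquad p_N=\P(\abs{Z}\ge t).
\]
It then suffices to show $Np_N\to\infty$. For this we use the Mills-ratio lower bound $\P(Z\ge t)\ge \frac{t}{t^2+1}\frac{1}{\sqrt{2\pi}}e^{-t^2/2}$, which gives $p_N \gtrsim N^{-(1-\varepsilon)^2}/\sqrt{\ln N}$. Hence $Np_N\gtrsim N^{1-(1-\varepsilon)^2}/\sqrt{\ln N}\to\infty$, because $1-(1-\varepsilon)^2>0$.

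The main obstacle is this lower tail. It needs an explicit Gaussian anti-concentration bound, and one must check that the polynomial gain $N^{2\varepsilon-\varepsilon^2}$ dominates the $\sqrt{\ln N}$ loss, which it does. Combining the two tails yields $M_N\Pto 1$, and multiplying by $\sigma$ gives the statement.
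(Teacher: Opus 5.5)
Your proposal is correct and follows the paper's proof essentially step for step. Both reduce to i.i.d.\ standard normals $Z_i$, control the upper tail by a union bound with $\P(\abs{Z}>t)\le 2e^{-t^2/2}$, and control the lower tail via $(1-p_N)^N\le\exp(-Np_N)$ with a Mills-ratio lower bound giving $Np_N\ge c\,N^{2\varepsilon-\varepsilon^2}/\sqrt{\ln N}\to\infty$; your separate treatment of the degenerate case $\phi=\bzero$ is a small detail the paper leaves implicit.
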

\begin{proof}
Let $Z_1, \dots, Z_N\sim \cN(0, 1)$ be i.i.d. random variables. Then,
$$\max_{i \in [N]} \frac{1}{\sqrt{2\ln N}} \abs*{\phi^\top \Lambda^{-1/2} \xi_i} = \|\phi\|_{\Lambda^{-1}} \frac{1}{\sqrt{2\ln N}} \max_{i \in [N]} \abs{Z_i}\;.$$
Thus, it suffices to show that $\frac{1}{\sqrt{2\ln N}} \max_{i \in [N]} \abs{Z_i} \Pto 1$.

\looseness=-1
We use the shorthand $u_N \df \sqrt{2\ln N}$. 
We prove the claim by showing that, for any $\varepsilon > 0$,
\begin{equation}\label{eq:max-gauss-converge-in-prob}
\mathbb{P}\left(\left|\frac{\max _{i \in [N]}\left|Z_i\right|}{u_N}-1\right|>\varepsilon\right) \leq \mathbb{P}\left(\max_{i \in [N]} \left|Z_i\right|>(1+\varepsilon) u_N\right)+\mathbb{P}\left(\max_{i \in [N]} \left|Z_i\right| \leq(1-\varepsilon) u_N\right)
\underset{N \rightarrow \infty}{\longrightarrow} 0.
\end{equation}
We show that both the upper tail and the lower tail vanish as $N \to \infty$.

\looseness=-1
\paragraph{Upper tail bound.}
Fix $\varepsilon > 0$. Let $Z \sim \cN(0, 1)$.
Using $\mathbb{P}(|Z|>t) \leq 2 \exp\paren{-t^2 / 2}$ for $t \geq 0$,
\begin{align*}
\mathbb{P}\left( \max_{i \in [N]} \left| Z_{i} \right| > (1+\varepsilon) u_{N}\right) 
&\leq N\mathbb{P}\left( \left| Z \right| > (1+\varepsilon) u_{N}\right) 
\leq 2 N \exp\paren*{-\frac{(1+\varepsilon)^2 u_N^2}{2}}\\
&=2 N \exp\paren*{-(1+\varepsilon)^2 \ln N}=2 N^{1-(1+\varepsilon)^2}=2 N^{-\left(2 \varepsilon+\varepsilon^2\right)}\underset{N \rightarrow \infty}{\longrightarrow} 0.
\end{align*}

\looseness=-1
\paragraph{Lower tail bound.}
For the lower tail, we have
\begin{equation}\label{eq:Max gaussian lower tail}
\mathbb{P}\left(\max _{i \leq N}\left|Z_i\right| \leq t\right)=(1-\mathbb{P}(|Z|>t))^N \leq \exp (-N \mathbb{P}(|Z|>t))\;.
\end{equation}
We use the following Gaussian tail lower bound:
$$
\mathbb{P}(|Z|>t) \geq \frac{2}{\sqrt{2 \pi}} \frac{t}{1+t^2} e^{-t^2 / 2}
$$
Take $t=(1-\varepsilon) u_N$. When $N\to \infty$, we have $t \rightarrow \infty$ and $\frac{t}{1+t^2} \asymp \frac{1}{t}$. So for large $N$, 
$$
\mathbb{P}(|Z|>t) \geq c \frac{1}{u_N} \exp\paren*{-\frac{(1-\varepsilon)^2 u_N^2}{2}}=c \frac{1}{u_N} N^{-(1-\varepsilon)^2}\;.
$$
Therefore,
$$
N \mathbb{P}(|Z|>t) \geq c \frac{1}{u_N} N^{1-(1-\varepsilon)^2}=c \frac{1}{\sqrt{2\ln N}} N^{2 \varepsilon-\varepsilon^2} \underset{N \rightarrow \infty}{\longrightarrow} \infty,
$$
since $2\varepsilon - \varepsilon^2 > 0$. By \cref{eq:Max gaussian lower tail}, we have
$$
\mathbb{P}\left(\max _{i \leq N}\left|Z_i\right| \leq(1-\varepsilon) u_N\right) \underset{N \rightarrow \infty}{\longrightarrow} 0\;.
$$
By substituting the upper and lower tail bounds into \cref{eq:max-gauss-converge-in-prob}, we obtain the desired result.
\end{proof}

\subsection{Proof of \cref{lemma:EATS-is-UCB}}\label{sec:proof-of-EATS-is-UCB}

\begin{proposition}[Restatement of \cref{lemma:EATS-is-UCB}] For any $k$, when $N\to \infty$, it holds that
\begin{align}
    \max_{\phi \in \cA}\max_{i \in [N]}  
    \phi^\top \hat{\theta}_k + \frac{\beta_k}{\sqrt{ 2\ln N }} \lvert {\phi^\top \Lambda_k^{-1/2}\xi_k^i} \rvert
    \Pto \max_{\phi \in \cA} \phi^\top \hat{\theta}_k + \beta_k \norm*{\phi}_{\Lambda_k^{-1}}\;.
\end{align}
\end{proposition}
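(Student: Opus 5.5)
We condition on $\cF_{k-1}$, so that $\hat\theta_k$, $\Lambda_k$ and $\beta_k$ are fixed and only the samples $\xi_k^1,\dots,\xi_k^N$ are random. Write $u_N=\sqrt{2\ln N}$, $M_N$ for the left-hand side of \cref{eq:EATS-is-UCB} and $M$ for its right-hand side. We prove the two one-sided bounds $\P(M_N< M-\varepsilon)\to 0$ and $\P(M_N> M+\varepsilon)\to 0$ separately. Conditional convergence at every realization of $\cF_{k-1}$ then gives the unconditional statement by dominated convergence.

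\textbf{Lower bound.} Let $\phi^{\UCB}_k$ attain $M$; it exists by compactness of $\cA$ and continuity of the objective. Since $M_N$ is a maximum over $\phi$, we have
\[
M_N \ge (\phi^{\UCB}_k)^\top\hat\theta_k+\beta_k\max_{i\in[N]}\frac{1}{u_N}\abs*{(\phi^{\UCB}_k)^\top\Lambda_k^{-1/2}\xi_k^i}.
\]
By \cref{lemma:motivation-EATS} applied to the fixed vector $\phi^{\UCB}_k$, the right-hand side converges in probability to $M$. Hence $\P(M_N<M-\varepsilon)\to0$.

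\textbf{Upper bound.} This is the step where uniformity over $\phi$ is needed, and it is the main obstacle, because \cref{lemma:motivation-EATS} is only pointwise in $\phi$. We avoid a covering argument through a Cauchy--Schwarz decoupling. Set $w_i=\Lambda_k^{-1/2}\xi_k^i$. For every $\phi$,
\[
\abs{\phi^\top w_i}=\abs*{(\Lambda_k^{-1/2}\phi)^\top\xi_k^i}\le\norm{\phi}_{\Lambda_k^{-1}}\abs*{v^\top\xi_k^i},\qquad v=\frac{\Lambda_k^{-1/2}\phi}{\norm{\Lambda_k^{-1/2}\phi}_2}\in\dS^{d-1}.
\]
This gives
\[
\sup_{\phi}\max_i\frac{\abs{\phi^\top w_i}}{u_N\norm{\phi}_{\Lambda_k^{-1}}}\le \frac{1}{u_N}\max_i\sup_{v\in\dS^{d-1}}\abs{v^\top\xi_k^i}=\frac{1}{u_N}\max_i\norm{\xi_k^i}_2 .
\]
It then suffices to show $\max_{i}\norm{\xi_k^i}_2/u_N\le 1+\varepsilon$ with probability tending to $1$. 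We use the union bound together with the Gaussian norm concentration $\P(\norm{\xi}_2>\sqrt d+t)\le e^{-t^2/2}$. With $t=(1+\varepsilon)u_N-\sqrt d$, which is $(1+\varepsilon)u_N-O(1)$ since $d$ is fixed, the union bound gives
\[
\P\Bigl(\max_i\norm{\xi_k^i}_2>(1+\varepsilon)u_N\Bigr)\le N\exp\Bigl(-\tfrac{\bigl((1+\varepsilon)u_N-\sqrt d\bigr)^2}{2}\Bigr)=N^{1-(1+\varepsilon)^2+o(1)}\to0 .
\]
On the complementary event, every $\phi$ satisfies
\[
\phi^\top\hat\theta_k+\frac{\beta_k}{u_N}\max_i\abs{\phi^\top w_i}\le \phi^\top\hat\theta_k+(1+\varepsilon)\beta_k\norm{\phi}_{\Lambda_k^{-1}}\le M+\varepsilon\beta_k\sup_{\phi\in\cA}\norm{\phi}_{\Lambda_k^{-1}}\le M+\varepsilon\beta_k/\sqrt{\lambda},
\]
using $\norm{\phi}_{\Lambda_k^{-1}}\le 1/\sqrt\lambda$. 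Since $\varepsilon$ is arbitrary, rescaling it gives $\P(M_N>M+\varepsilon)\to0$.

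Combining the two one-sided bounds yields $M_N\Pto M$. The key step is the upper bound. Its supremum over the whole sphere costs nothing asymptotically, because for fixed $d$ the norm $\norm{\xi}_2$ has the same leading-order extreme-value behaviour $\sqrt{2\ln N}$ as a single coordinate. This is also why the argument is purely asymptotic: the $\sqrt d$ offset is absorbed only once $\sqrt{\ln N}\gg\sqrt d$.
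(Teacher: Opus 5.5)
Your proof is correct, but it follows a different route from the paper's.

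\textbf{The paper's route.} The paper proves the stronger two-sided uniform statement $\sup_{\phi\in\mathcal{A}}\bigl|M_N(\phi)-\beta\|\phi\|_{\Lambda^{-1}}\bigr|\Pto 0$, where $M_N(\phi)$ is the EATS bonus. It does this by covering. It applies the pointwise proposition on a finite $\varepsilon$-net of $\mathcal{A}$ with a union bound. It then extends to all of $\mathcal{A}$ via Lipschitz continuity of $M_N$, with the random constant $L_N\propto\max_i\|\xi^i\|_2/\sqrt{2\ln N}$. It concludes with $|\max F_N-\max F|\le\sup|F_N-F|$.

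\textbf{Your route.} You split into one-sided bounds instead. The lower bound needs the pointwise result only at the UCB maximizer. For the upper bound, the Cauchy--Schwarz step $|\phi^\top\Lambda^{-1/2}\xi|\le\|\phi\|_{\Lambda^{-1}}\|\xi\|_2$ removes $\phi$ altogether. Uniformity then reduces to the single scalar $\max_i\|\xi^i\|_2/\sqrt{2\ln N}$, which you handle by Gaussian concentration of the norm and a union bound.

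\textbf{What each buys.} Your route avoids the net and the random Lipschitz constant and gives explicit tails. In particular, it shows that with high probability the EATS bonus overshoots the UCB bonus, uniformly over arms, by a factor of at most roughly $1+\sqrt{d/(2\ln N)}$. This makes the $\ln N$ versus $d$ trade-off behind the paper's open question quantitative. Because that upper bound is uniform in $\phi$, combining it with convergence of the maxima still shows the EATS arm is near-optimal for the UCB objective. So nothing is lost for the ``recovers UCB'' interpretation. The paper's route yields a genuinely two-sided uniform law for the bonus function. It is also a generic template (pointwise convergence plus equi-Lipschitz) not tied to the Cauchy--Schwarz structure.

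Incidentally, the paper's Lipschitz step uses $\max_i\|\xi^i\|_2/\sqrt{2\ln N}\Pto 1$. This is a $d$-dimensional fact that the paper attributes to its one-dimensional proposition, from which it does not directly follow. Your norm-concentration bound is exactly the ingredient that step actually needs, namely boundedness in probability.
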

\begin{proof}
We condition on $\cF_{k-1}$. Without loss of generality, we prove the following claim: For fixed symmetric positive definite matrix $\Lambda \in \R^{d \times d}$, vector $\theta \in \mathbb{R}^d$ and $\beta > 0$, 
\begin{align}\label{eq:simple-prob-convergence}
    \max_{\phi \in \cA} \phi^\top \theta + \underbrace{\beta \max_{i \in [N]} \frac{1}{\sqrt{2\ln N}} \abs*{\phi^\top \Lambda^{-1/2}\xi^i}}_{\fd M_N(\phi)} \Pto \max_{\phi \in \cA} \phi^\top \theta + \beta \norm*{\phi}_{\Lambda^{-1}}\;.
\end{align}

\looseness=-1
Since $\cA$ is compact, pick a finite $\varepsilon$-net $\cN_\varepsilon$ of $\cA$ such that, for every $\phi \in \cA$, there is $\psi \in \cA_\varepsilon$ with $\norm*{\phi - \psi}_2 \leq \varepsilon$. From \cref{lemma:motivation-EATS} and union bound over $\psi \in \cA_\varepsilon$, we have
\begin{align}\label{eq:MN-union bound}
    \max_{\psi \in \cA_\varepsilon} \abs*{M_N(\psi) - \beta \norm*{\psi}_{\Lambda^{-1}}} \Pto 0\;.
\end{align}

\looseness=-1
We have the following Lipschitz continuity of $M_N$: for any $\phi_1, \phi_2 \in \cA$,
\begin{align*}
    \abs*{M_N(\phi_1) - M_N(\phi_2)} \leq \frac{\beta}{\sqrt{2\ln N}} \max_{i \in [N]} \abs*{(\phi_1 - \phi_2)^\top \Lambda^{-1/2}\xi^i} \leq \norm*{\phi_1 - \phi_2}_2 L_N\;,
\end{align*}
where we defined
$$
L_N \df \frac{\beta}{\sqrt{2\ln N}} \max_{i \in [N]} \norm*{\Lambda^{-1/2}\xi^i}_2 \leq
\beta \norm*{\Lambda^{-1/2}}_{\mathrm{op}} \frac{1}{\sqrt{2\ln N}} \max_{i \in [N]} \norm*{\xi^i}_2 \Pto \beta \norm*{\Lambda^{-1/2}}_{\mathrm{op}}\;.
$$
Here, $\|\cdot\|_{\mathrm{op}}$ denotes the operator norm and the convergence in the last step follows from \cref{lemma:motivation-EATS}. Additionally, $\phi \mapsto \|\phi\|_{\Lambda^{-1}}$ is $\|\Lambda^{-1/2}\|_{\mathrm{op}}$-Lipschitz continuous. Therefore, by combining the above Lipschitz continuity results with \cref{eq:MN-union bound}, we have
\begin{align*}
    \max_{\phi \in \cA} \abs*{M_N(\phi) - \beta \norm*{\phi}_{\Lambda^{-1}}} &\leq \max_{\psi \in \cA_\varepsilon} \abs*{M_N(\psi) - \beta \norm*{\psi}_{\Lambda^{-1}}} + O(\varepsilon) \Pto O(\varepsilon)\;.
\end{align*}

\looseness=-1
Finally, define $F_N(\phi) \df \phi^\top \theta + M_N(\phi)$ and $F(\phi) \df \phi^\top \theta + \beta \norm*{\phi}_{\Lambda^{-1}}$. We have
\begin{align*}
    \abs*{\max_{\phi \in \cA} F_N(\phi) - \max_{\phi \in \cA} F(\phi)} \leq 
   \max_{\phi \in \cA} \abs*{F_N(\phi) - F(\phi)} \leq \max_{\psi \in \cA_{\varepsilon}} \abs*{M_N(\psi) - \beta \norm*{\psi}_{\Lambda^{-1}}} + O(\varepsilon) \Pto O(\varepsilon)\;.
\end{align*}
By letting $\varepsilon \to 0$, we obtain the desired result.
\end{proof}

\newpage
\section{Additional Experimental results}
\begin{figure}[ht]
    \centering

    \begin{minipage}{0.48\linewidth}
        \centering
        \includegraphics[width=\linewidth]{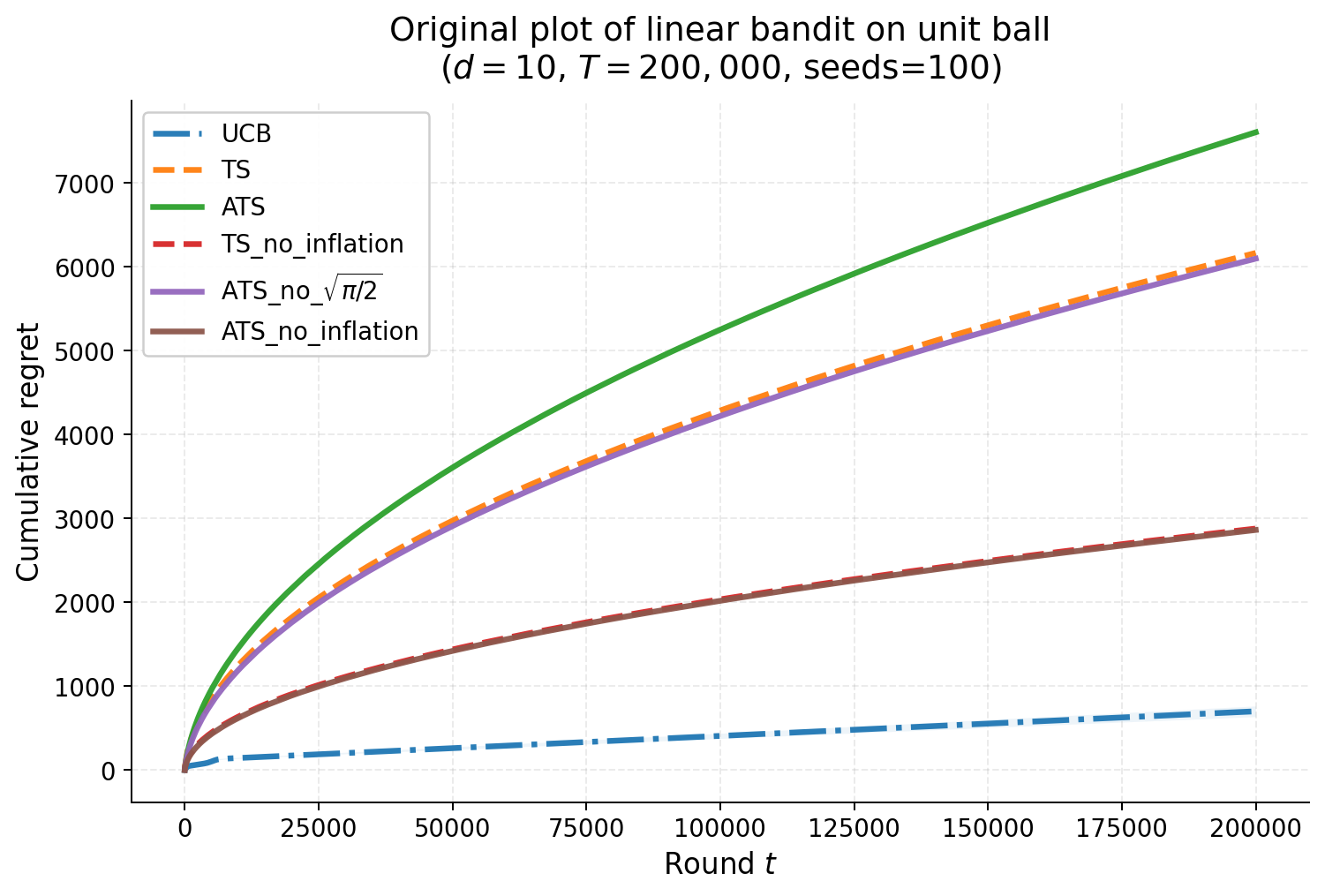}
    \end{minipage}
    \hfill
    \begin{minipage}{0.48\linewidth}
        \centering
        \includegraphics[width=\linewidth]{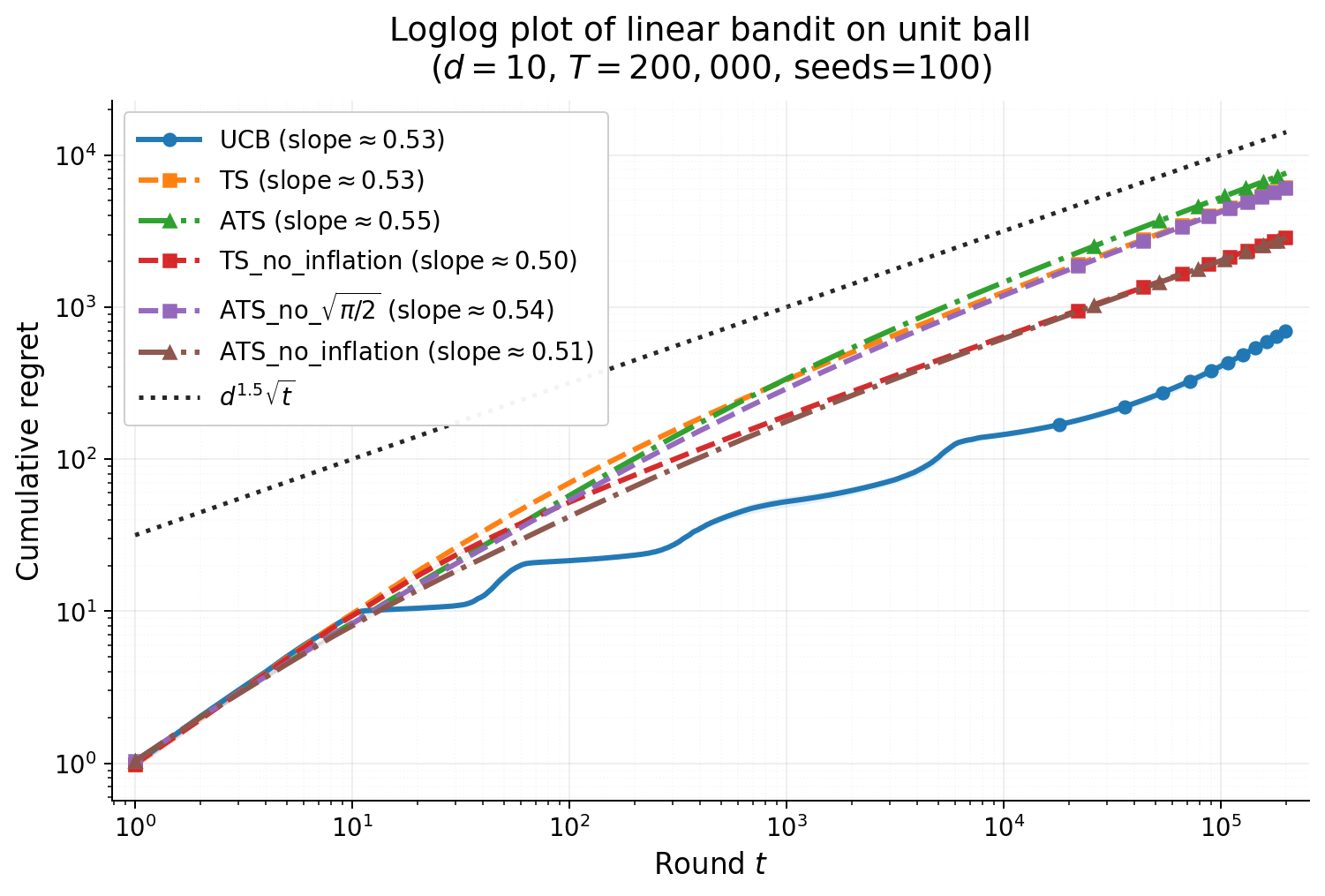}
    \end{minipage}

    \vspace{0.8em}


    \caption{\small
    Numerical results for linear bandits on the unit ball with feature dimension $d=10$, horizon $T=200000$, $\|\theta^\star\|=1$, and noise $\eta_t \sim \mathcal N(0,0.1)$. The parameter $\theta^\star$ is sampled uniformly from the unit sphere, and regret is averaged over $100$ independent seeds. 
    \textit{(Left)} Regret curves of UCB, TS, TS with no inflation, ATS, ATS with no $\sqrt{\pi/2}$ factor and  ATS and the right column shows log-log plots of cumulative regret.  
    In the right column, the black dotted line is $R_{\mathrm{ref}}(t)=d^{1.5}\sqrt{t}$ for reference.}
    \label{fig:unitsphere-three-rows}
\end{figure}
\begin{figure}[ht]
    \centering

    \begin{minipage}{0.48\linewidth}
        \centering
        \includegraphics[width=\linewidth]{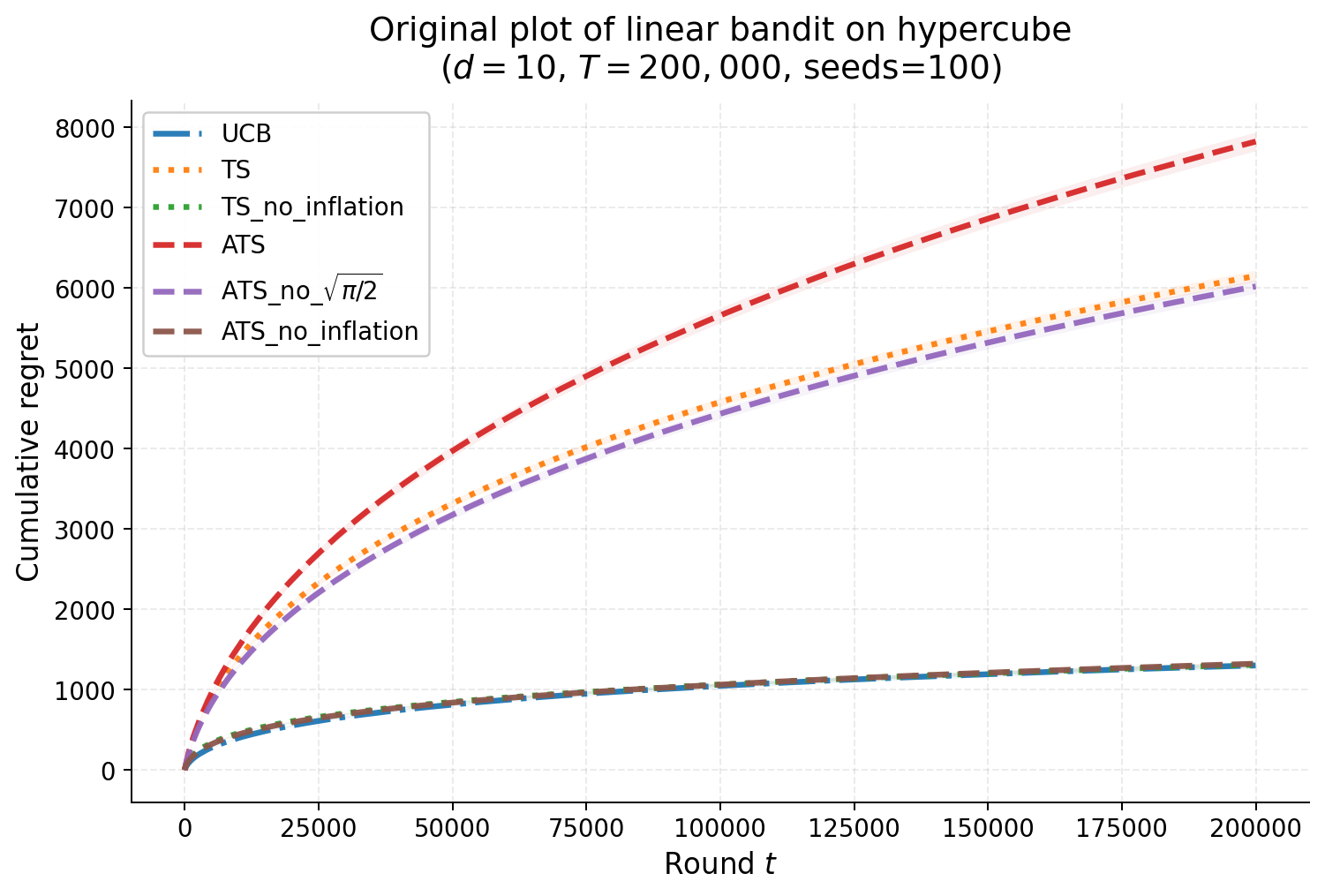}
    \end{minipage}
    \hfill
    \begin{minipage}{0.48\linewidth}
        \centering
        \includegraphics[width=\linewidth]{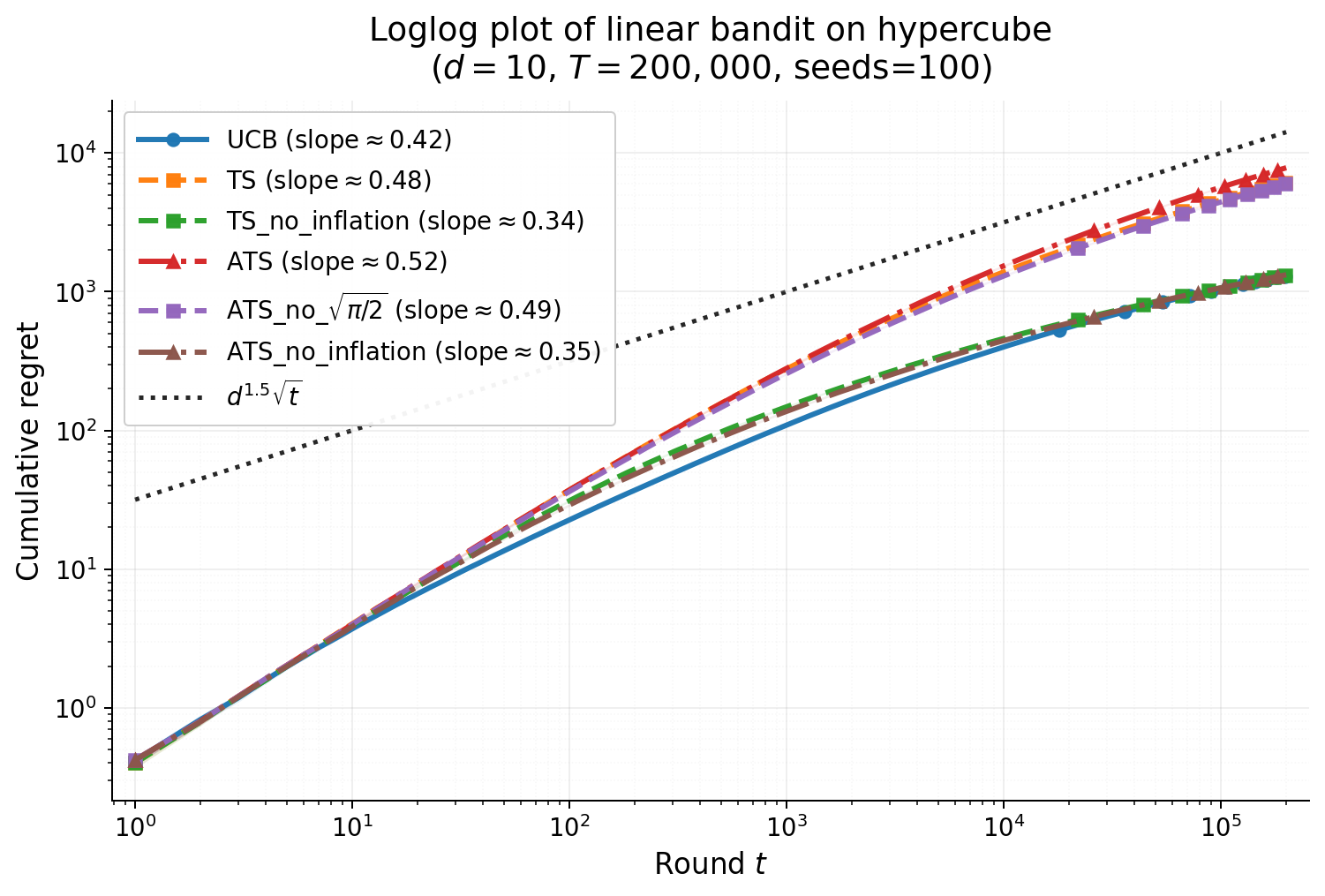}
    \end{minipage}


    \caption{\small
    Numerical results for linear bandits on the hypercube with feature dimension $d=10$, horizon $T=200000$, $\|\theta^\star\|=1$, and noise $\eta_t \sim \mathcal N(0,0.1)$. The parameter $\theta^\star$ is sampled uniformly from the unit sphere, and regret is averaged over $100$ independent seeds. 
    \textit{(Left)} Regret curves of UCB, TS, TS-no-inflation, ATS, ATS-no-inflation, ATS-no-$\sqrt{\pi/2}$. \textit{(Right)} Log-log plots of the regret curves. The black dotted line is $R_{\mathrm{ref}}(t)=d^{1.5}\sqrt{t}$ for reference.}
    \label{fig:hypercube-three-rows}
\end{figure}

\begin{figure}
    \centering
    \includegraphics[width=0.5\linewidth]{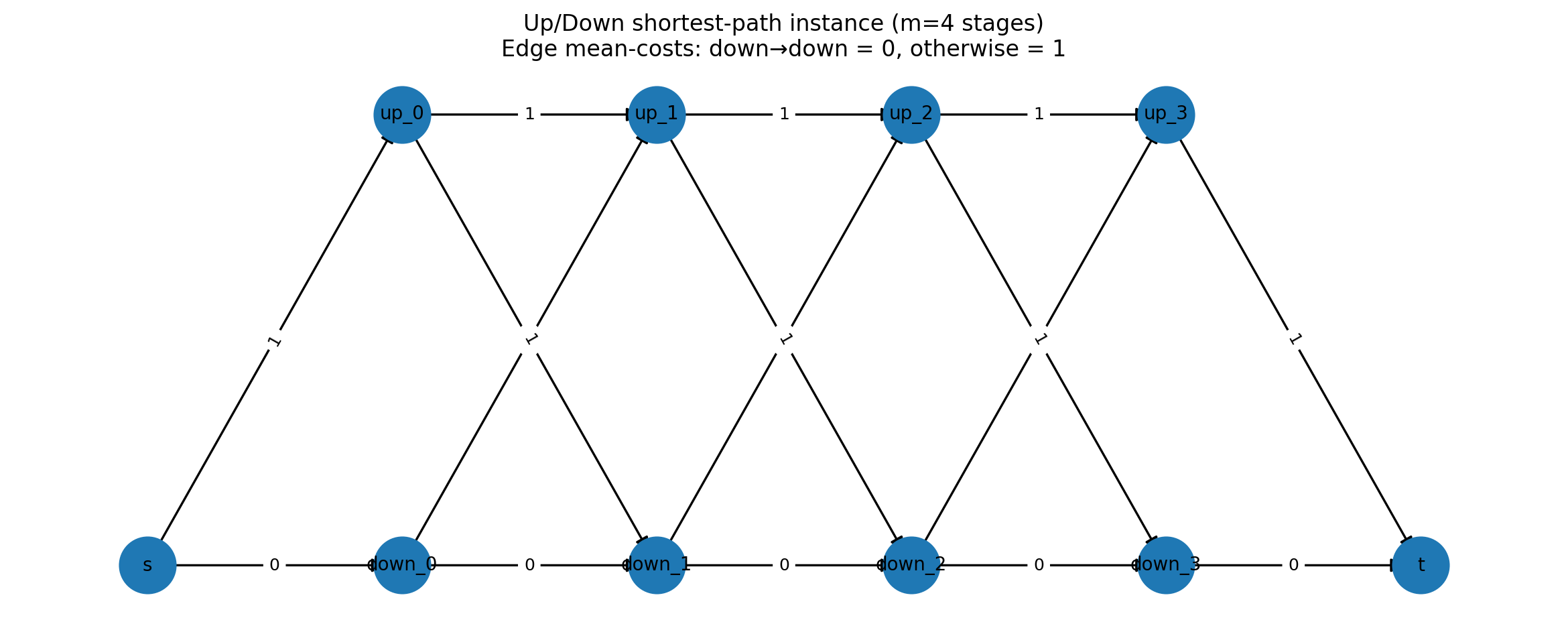}
    \caption{\small Example of the two-lane shortest path problem with $m=4$}
    \label{fig:shortest-path}
\end{figure}
\end{document}